\documentclass{article}

 \usepackage[preprint]{neurips_2026}

\usepackage[utf8]{inputenc} %
\usepackage[T1]{fontenc}    %
\usepackage{hyperref}       %
\usepackage{url}            %
\usepackage{booktabs}       %
\usepackage{amsfonts}       %
\usepackage{nicefrac}       %
\usepackage{microtype}      %
\usepackage{xcolor}         %
\usepackage{amsmath}
\usepackage{amsthm}
\usepackage{enumitem}
\usepackage{amssymb}
\usepackage{xspace}

\usepackage[ruled,linesnumbered,vlined]{algorithm2e}
\SetKwComment{Comment}{\# }{}
\SetKwFunction{DefineFeatures}{DefineFeatures}
\SetKwFunction{ExtractFeatures}{ExtractFeatures}
\SetKwFunction{CrossValidate}{CrossValidate}
\SetKwFunction{FitGP}{FitGP}
\SetKwFunction{OptimizeAcq}{OptimizeAcquisition}
\SetKwFunction{GenRefine}{GenerateWithRefinement}
\SetKwFunction{FeatureRefine}{FeatureGuidedRefine}
\SetKwFunction{StratSample}{StratifiedSubsample}
\SetKw{ParFor}{parallel for}
\SetKw{Break}{break}
\SetKw{KwReturn}{return}

\usepackage[most]{tcolorbox}

\newtcolorbox{promptbox}[1][]{%
  colback=gray!5,
  colframe=black!60,
  fonttitle=\bfseries,
  enhanced,
  left=4pt, right=4pt, top=2pt, bottom=2pt,
  #1
}

\newcommand{\tvar}[1]{\textcolor{violet}{\texttt{\{#1\}}}}

\usepackage{booktabs}
\usepackage{tabularx}
\usepackage{multirow}

\usepackage{graphicx}
\usepackage{subcaption}

\usepackage[disable,textsize=tiny]{todonotes}

\newcommand{\method}{ReElicit\xspace} 

\newtheorem{assumption}{Assumption}
\newtheorem{lemma}{Lemma}
\newtheorem{theorem}{Theorem}
\newtheorem{definition}{Definition}
\title{Embedding by Elicitation: Dynamic Representations for Bayesian Optimization of System Prompts}

\author{%
  Zhiyuan Jerry Lin \\
  Meta\\
  \texttt{zylin@meta.com} \\
  \And
  Benjamin Letham \\
  Meta\\
  \texttt{bletham@meta.com} \\
  \And
  Samuel Dooley \\
  Meta\\
  \texttt{dooley@meta.com} \\
  \AND
  Maximilian Balandat \\
  Meta\\
  \texttt{balandat@meta.com} \\
  \And
  Eytan Bakshy \\
  Meta\\
  \texttt{ebakshy@meta.com}
}

\begin{document}

\maketitle

\begin{abstract}
System prompts are a central control mechanism in modern AI systems, shaping behavior across conversations, tasks, and user populations.
Yet they are difficult to tune when feedback is available only as aggregate metrics rather than per-example labels, failures, or critiques.
We study this aggregate feedback setting as sample-constrained black-box optimization over discrete, variable-length text.
We introduce \method, a Bayesian optimization framework based on \emph{embedding by elicitation}.
Given a task description, previously evaluated prompts, and scalar scores, an LLM elicits a compact, interpretable feature space and maps prompts into it.
Leveraging a probabilistic Gaussian process surrogate, an acquisition function then selects target feature vectors, which the LLM realizes and refines into deployable system prompts.
Re-eliciting the feature space as new evaluations arrive lets the representation adapt to the observed prompt-score history.
We evaluate the setting using offline benchmark accuracy as a controlled aggregate proxy: the optimizer observes one scalar score per prompt and no per-example labels, errors, or critiques.
Across ten system prompt optimization tasks with a 30 total evaluation budget, \method achieves the strongest aggregate performance profile among representative aggregate-only prompt-optimization baselines.
These results suggest that LLMs can serve as adaptive semantic representation builders, not only prompt generators, for Bayesian optimization over natural-language artifacts.
\end{abstract}

\section{Introduction}
The \emph{system prompt} is a central control mechanism in modern AI systems.
It shapes response style, guardrails, and operational policies that persist across conversations and tasks. As a result, small prompt changes can affect many downstream interactions.
Despite this importance, system prompts are still often crafted manually using developer intuition and limited offline evaluation.

Recent work on automatic prompt optimization (APO) seeks to tune prompts automatically to maximize a target objective~\citep{ramnath2025systematic}.
Many APO methods assume granular task feedback: a candidate prompt is evaluated on labeled examples, and the optimizer can inspect per-example successes, failures, traces, or critiques.
That interface is powerful, but it differs from many deployment-facing system prompt optimization settings where outcomes are delayed, population-level, or meaningful only after repeated interactions.
Examples include long-horizon task-completion rates, safety-incident rates, user satisfaction, retention, and escalation rates.
Although such metrics aggregate many individual interactions, the optimizer may observe only a top-line scalar score for each deployed prompt variant that is not separable into individual interactions.

In this regime, prompt optimization is no longer a supervised prompt-revision problem over labeled examples.
It is a sample-constrained black-box optimization problem over natural language.
Directly asking an LLM to propose better prompts is a natural baseline, but such search does not explicitly model uncertainty or provide a principled exploration--exploitation tradeoff.

Bayesian optimization (BO) is a natural tool for expensive scalar-feedback objectives.
It fits a probabilistic surrogate to past evaluations and uses an acquisition function to balance exploration and exploitation.
BO is widely used for hyperparameter tuning and machine-learning system design~\citep{shahriari2015taking,balandat2020botorch}, A/B testing~\citep{olson2025ax,letham2019constrained}, and other aggregate feedback LLM settings such as data mix optimization~\citep{yen2025data}.

The obstacle in our setting is representation and realization.
BO typically operates in a fixed low-dimensional Euclidean domain, whereas system prompts are discrete, variable-length, and semantically structured natural-language objects.
Structured-input BO can use hand-crafted kernels over discrete objects~\citep{oh2019combinatorial,moss2020boss,griffiths2023gauche} or learned latent representations~\citep{gomez2018automatic,deshwal2021combining,maus2022local}, but these tools do not directly give a full prompt-optimization loop.
Kernel methods still require searching over text by enumeration or sampling, while learned latent spaces usually require auxiliary data or task-specific encoder--decoder training.
We therefore need a compact space that supports surrogate modeling and acquisition optimization, together with a way to map optimized points in that space back to deployable system prompts.

Our approach is to use the LLM itself as a semantic representation builder.
Given a task description, evaluated prompts, and scalar scores, the LLM proposes a small set of performance-relevant feature axes and maps prompts into $[0,1]^{d_t}$.
The premise is not that prompts are intrinsically simple, but that the task-relevant variation observed under a specific objective may concentrate along a few semantically meaningful axes.
Useful axes may capture answer-format control, calibrated uncertainty, explicit reasoning structure, evidence use, or task-specific distinctions such as numerical consistency, ambiguity resolution, symbolic validity, or pragmatic intent.
These are not surface features such as length or token overlap; they are semantic directions along which prompts can differ.

This representation gives BO a compact continuous space for surrogate modeling and acquisition optimization.
The LLM then realizes BO-selected feature targets as deployable prompts and refines them using feature-gap feedback.
Re-eliciting the feature space as new evaluations arrive allows the representation to adapt to evidence about which prompt properties distinguish high- and low-performing candidates.

Our key contributions are:
\begin{itemize}[leftmargin=*]
    \item We cast aggregate feedback system prompt tuning as a black-box optimization problem, distinct from prompt optimization settings that rely on per-example labels, error traces, or textual critiques.

    \item We introduce \method, a Bayesian optimization framework based on \emph{embedding by elicitation}: an LLM elicits a semantic feature space from prompt-score history, BO selects target feature vectors in this space, and the LLM realizes and refines those targets as natural language system prompts.

    \item We provide a reachability analysis showing how representation error affects optimization in an elicited embedding: under an oracle smooth semantic embedding assumption, near-optimality for the approximated objective transfers to a bounded true prompt-quality gap.

    \item We evaluate \method on ten benchmark system prompt optimization tasks under a shared 30-evaluation budget and an aggregate-only feedback interface.
    \method achieves the strongest overall performance profile among representative aggregate-only APO baselines, and our diagnostics and ablations analyze feature stability, surrogate fit, refinement quality, and component contributions.
\end{itemize}

\section{Related Work}
\label{sec:related_work}

\paragraph{Automatic prompt optimization.}
Automatic prompt optimization searches directly over natural-language instructions~\citep{ramnath2025systematic}.
Many APO methods use instance-level feedback, such as per-example labels, textual critiques, error traces, or reflections on successes and failures; examples include ProTeGi, TextGrad, and GEPA~\citep{pryzant2023automatic,yuksekgonul2024textgrad,agrawal2025gepa}.
Other methods are closer to aggregate black-box search: APE samples prompts from an LLM~\citep{zhou2022large}, OPRO conditions generation on previous solutions and scores~\citep{yang2023large}, PromptBreeder uses evolutionary mutation and recombination~\citep{fernando2023promptbreeder}, and label-free prompt optimizers reduce dependence on labeled instance-level feedback~\citep{wu2025llm}.

\paragraph{Bayesian optimization over structured and embedded spaces.}
BO is most straightforward in low-dimensional continuous domains, while system prompts are discrete, variable-length, and semantically structured.
Prior work addresses this challenge through the use of lower-dimensional embeddings~\citep{wang2016bayesian,letham2020re}, kernels for structured objects such as strings or graphs~\citep{oh2019combinatorial,moss2020boss,griffiths2023gauche}, and learned latent spaces or deep kernels~\citep{gomez2018automatic,deshwal2021combining,maus2022local,maus2023joint,moss2025return,wilson2016deep}.
These methods motivate our use of a compact representation, but they do not directly provide a full prompt-optimization loop.
Structured kernels still require searching over text, typically by sampling or enumeration, and learned latent spaces usually require auxiliary data or a trained encoder--decoder.
One might consider performing BO over off-the-shelf dense text embeddings. However, this is prohibitive in the small data setting. Fitting a surrogate in thousands of dimensions on only very few observations yields uninformative posteriors. Furthermore, even if dimensionality reduction (e.g., PCA) were applied, decoding an optimized continuous latent vector back into deployable discrete text requires auxiliary trained decoders. In addition, such dimensionality reduction would pick out the most important \emph{general} latent features, but what we are really after are the features most relevant \emph{specifically for system prompt performance} for the target application. \method bypasses both the curse of dimensionality and the inverse-mapping problem by allowing the LLM to construct a low-dimensional, interpretable semantic space where BO targets can be realized natively as deployable prompts via text generation.

\paragraph{Bayesian and surrogate-based prompt optimization.}
Several methods combine BO or surrogate modeling with prompt or instruction search.
InstructZero optimizes soft prompts for an instruction generator~\citep{chen2023instructzero}; BOInG uses BO in relaxed or generator-mediated instruction spaces~\citep{sabbatella2024bayesian}; MIPRO uses a Bayesian surrogate to search over instructions and demonstrations for LM programs~\citep{opsahl2024optimizing}; HbBoPs combines a structural-aware deep-kernel GP with Hyperband for prompt selection~\citep{schneider2024hyperband}; and BOPRO performs BO over fixed embeddings of language-based solutions~\citep{agarwal2025searching}, with related work on prompt and code-generation search~\citep{ballew2025llm,tomar2025exploratory}.
These methods are closely related but target different interfaces, including soft prompts, finite prompt/program configurations, few-shot demonstrations, candidate pools, or fixed embedding spaces.
\method targets deployable hard system prompts under prompt-level scalar feedback, using dynamic elicitation to build the BO representation during optimization.

\section{Method}

\subsection{Problem Setting}
\label{sec:problem-setting}

We consider black-box optimization over system prompts.
Let $f:\mathcal{X}\to\mathbb{R}$ denote an objective that maps a prompt $x\in\mathcal{X}$ to a scalar score $y=f(x)$.
In deployment, this score may be a delayed aggregate metric such as task-completion rate, safety-incident rate, or user satisfaction.
In our experiments, $f(x)$ is benchmark accuracy on a fixed evaluation set, exposed to the optimizer only as a single prompt-level scalar.
The optimizer does not observe per-example labels, individual failures, answer traces, or textual critiques.

The goal is to find a high-scoring prompt within a highly constrained evaluation budget.
Motivated by the typical setting of conducting multiple long-running online experiments in parallel, we use batched optimization with batch size $q$.   
Let $T$ denote the total number of evaluated batches, including the initial seed batch.
The initial dataset $\mathcal{D}_0=\{(x_i,y_i)\}_{i=1}^q$ contains $q$ evaluated seed prompts and corresponds to iteration $t=0$.
For each optimization round $t=1,\ldots,T-1$, the optimizer uses the current history $\mathcal{D}_{t-1}$ to propose a new batch of $q$ prompts, evaluates them, and updates
\[
\mathcal{D}_t
=
\mathcal{D}_{t-1}
\cup
\{(x_{t,j}^{\mathrm{new}},y_{t,j}^{\mathrm{new}})\}_{j=1}^q .
\]
After round $t$, the dataset contains $q(t+1)$ evaluated prompts. The total evaluation budget is $N=qT$.

\subsection{\method}

\begin{algorithm}[ht]
\small
\caption{\method main loop. Expanded subroutines and prompts are in Appendix~\ref{sec:full_algo}.}
\label{alg:reelicit}

\KwIn{$\mathcal{D}_0=\{(x_i,y_i)\}_{i=1}^q$, total evaluated batches $T$, batch size $q$, acquisition function $\alpha$, elicitation rounds $K$, realization budget $M$, tolerance $\tau$}
\KwOut{$x^*=\arg\max_{(x,y)\in\mathcal{D}_{T-1}}y$}

$\mathcal{F}_0 \leftarrow \varnothing$\;

\For{$t=1,\ldots,T-1$}{
  Let $X_{t-1},Y_{t-1}$ be the prompts and scores in $\mathcal{D}_{t-1}$\;

  \For{$k=1,\ldots,K$}{
    $\mathcal{F}_t^{(k)} \leftarrow \DefineFeatures(\mathcal{D}_{t-1},\mathcal{F}_{t-1})$\;
    $Z_t^{(k)} \leftarrow \ExtractFeatures(X_{t-1},\mathcal{F}_t^{(k)})$\;
    $s_t^{(k)} \leftarrow \mathrm{CV}(Z_t^{(k)},Y_{t-1})$\;
  }

  \If{$t>1$}{
    Add incumbent $\mathcal{F}_{t-1}$ as an additional candidate by re-extracting it on $X_{t-1}$ and scoring it by CV\;
  }

  Select $\mathcal{F}_t$ with lowest CV error, and let $Z_t$ be the corresponding embeddings\;

  Fit GP surrogate $\mathcal{M}_t$ on $(Z_t,Y_{t-1})$\;
  $\{z_{t,1}^{\mathrm{new}},\ldots,z_{t,q}^{\mathrm{new}}\}
  \leftarrow
  \arg\max_{z_1,\ldots,z_q\in[0,1]^{d_t}}
  \alpha(z_1,\ldots,z_q\mid\mathcal{M}_t)$\;

  \For{$j=1,\ldots,q$}{
    $x_{t,j}^{\mathrm{new}}
    \leftarrow
    \GenRefine(z_{t,j}^{\mathrm{new}},\mathcal{F}_t,\mathcal{D}_{t-1},M,\tau)$\;
  }

  Evaluate $y_{t,j}^{\mathrm{new}}=f(x_{t,j}^{\mathrm{new}})$ for each $j$\;
  $\mathcal{D}_t
  \leftarrow
  \mathcal{D}_{t-1}
  \cup
  \{(x_{t,j}^{\mathrm{new}},y_{t,j}^{\mathrm{new}})\}_{j=1}^{q}$\;
}

\KwReturn{$x^*=\arg\max_{(x,y)\in\mathcal{D}_{T-1}}y$}
\end{algorithm}

\method performs BO in an LLM-elicited feature space rather than directly over text.
At round $t$, the method constructs an embedding
\[
g_t:\mathcal{X}\to[0,1]^{d_t}
\]
from a set of natural-language feature descriptions $\mathcal{F}_t$.
Each feature describes a task-relevant semantic property of a prompt, and the LLM maps prompts to numerical coordinates along these axes.

Algorithm~\ref{alg:reelicit} summarizes the main optimization loop.
Feature definition and feature extraction use different information.
\DefineFeatures sees the task context and prompt-score history, allowing performance evidence to inform which semantic axes are proposed.
\ExtractFeatures sees prompts and feature definitions, but not scores, so the resulting coordinates are based on prompt content rather than direct outcome leakage.
The selected representation is the candidate feature set with the lowest cross-validation (CV) error for predicting observed scores; when $t>1$, the previous feature set is re-extracted on the enlarged history and included in the batch as an incumbent candidate.

Given the selected embedding, \method fits a Gaussian process surrogate to the embedded observations and uses a batch acquisition function to select target feature vectors.
These vectors are not prompts: they are desired semantic coordinates in the elicited representation.
The LLM realizes each target as a system prompt and refines it using feature-gap feedback, because the LLM-induced inverse map from continuous feature coordinates to text is lossy.
The resulting prompts are evaluated with $f$ and appended to the optimization history.

\section{Theoretical Analysis}
\label{sec:theory}

Our theoretical analysis studies the \emph{reachability} question underlying \method:
when does a point that is near-optimal in an elicited, low-dimensional representation correspond to a good prompt in the original text space?
This question is analogous to reachability analyses in random-embedding BO~\citep{wang2016bayesian}, but differs in that the embedding here is nonlinear, semantic, and constructed by an LLM from the observed prompt-score history.

We formalize this by comparing the elicited embedding to an oracle semantic embedding under which the objective is smooth.
If the elicited embedding preserves the performance-relevant geometry of this oracle representation, then points that are good in the elicited representation are also good for the true objective.
The resulting bound identifies the role of representation error in \method: the better the elicited embedding approximates the oracle embedding, the tighter the connection between optimization in feature space and optimization over prompts.

Let $\mathcal{X}$ be a finite prompt universe.
We assume there exists an injective oracle embedding $g^*: \mathcal{X} \to \mathcal{Z} \subset \mathbb{R}^d$, that, combined with a stationary base latent kernel $k_Z(z, z')$, produces an oracle kernel $k^*(x, x') = k_Z(g^*(x), g^*(x'))$.

\begin{assumption}\label{ass:rkhs}
The true objective function $f: \mathcal{X} \to \mathbb{R}$ resides in the RKHS of the oracle kernel, $f \in \mathcal{H}_{k^*}$, with bounded norm $\|f\|_{k^*} \leq B$.
\end{assumption}

\begin{assumption}\label{ass:smoothness}
The RKHS latent feature map $\phi: \mathcal{Z} \to \mathcal{H}_Z$ is Lipschitz continuous with respect to the latent distance: $\|\phi(z) - \phi(z')\|_{\mathcal{H}_Z} \leq L \|z - z'\|$, $\forall z, z' \in \mathcal{Z}$.
\end{assumption}

We model $f$ through the elicited embedding $g_t$.
Because $g_t$ may differ from the oracle embedding $g^*$, the objective representable through $g_t$ may only approximate the true objective.
For the analysis, let $f_t$ denote the best uniform approximation to $f$ over $\mathcal{X}$ among functions representable through $g_t$ with RKHS norm at most $B$; the full definition is given in Appendix~\ref{app:proof}.
The quality of this approximation depends on the mismatch between $g_t$ and $g^*$.

\begin{assumption}\label{ass:embed}
At iteration $t$, the error between the elicited embedding $g_t(x)$ and the oracle embedding $g^*(x)$ is bounded by $\eta_t \geq 0$ such that $\forall x \in \mathcal{X}$, $\|g^*(x) - g_t(x)\| \leq \eta_t$.
\end{assumption}

This assumption abstracts away the variable-dimensional feature sets used in the implementation and analyzes a fixed latent space in which the elicited and oracle embeddings can be compared.

\begin{definition}
Let $x^*$ be an optimum of $f$: $f(x^*) = \max_{x \in \mathcal{X}} f(x)$.
The $\epsilon$-suboptimal set of $f$ is $S_{\epsilon}(f) = \{x: f(x^*) - f(x) \leq \epsilon\}$.
\end{definition}

We take Assumptions~\ref{ass:rkhs}--\ref{ass:embed} throughout the analysis.
The main result bounds the true optimality gap of any point that is nearly optimal under the approximated objective $f_t$.

\begin{theorem}\label{thm}
Suppose $x_t$ is $\delta$-suboptimal with respect to $f_t$, i.e.,
$\max_{x \in \mathcal{X}} f_t(x) - f_t(x_t) \leq \delta$. Then $x_t$ is $\epsilon$-suboptimal for the true objective, with $\epsilon = \delta + 2BL\eta_t$, i.e., 
$f(x^*) - f(x_{t}) \leq \delta + 2B L \eta_t.$
\end{theorem}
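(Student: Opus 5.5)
The plan is to prove a uniform approximation bound $\sup_{x\in\mathcal{X}}|f(x)-f_t(x)|\le BL\eta_t$. The theorem then follows from a standard three-term decomposition of the optimality gap.

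\textbf{Step 1: uniform approximation.} By Assumption~\ref{ass:rkhs} and the construction of $k^*$, the objective admits a representation $f(x)=\langle w,\phi(g^*(x))\rangle_{\mathcal{H}_Z}$ with $\|w\|_{\mathcal{H}_Z}\le B$. This holds because $k^*$ is the pullback of $k_Z$ through the injective map $g^*$ on the finite set $\mathcal{X}$, so $\mathcal{H}_{k^*}$ is isometric to the restriction of $\mathcal{H}_Z$. Concretely, one takes $w$ to be the minimum-norm element of $\mathcal{H}_Z$ interpolating $f$ on $g^*(\mathcal{X})$, whose norm is $\|f\|_{k^*}$.

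Define the candidate $\tilde f(x)=\langle w,\phi(g_t(x))\rangle_{\mathcal{H}_Z}$. It is representable through $g_t$ with RKHS norm at most $B$, so it is admissible in the definition of $f_t$. For every $x$, Cauchy--Schwarz together with Assumptions~\ref{ass:smoothness} and~\ref{ass:embed} gives
\[
|f(x)-\tilde f(x)|\le \|w\|\,\|\phi(g^*(x))-\phi(g_t(x))\|\le BL\eta_t .
\]
Since $f_t$ is the best uniform approximation over this class, we get $\|f-f_t\|_\infty\le\|f-\tilde f\|_\infty\le BL\eta_t$.

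\textbf{Step 2: the decomposition.} Write
\[
f(x^*)-f(x_t)=\bigl[f(x^*)-f_t(x^*)\bigr]+\bigl[f_t(x^*)-f_t(x_t)\bigr]+\bigl[f_t(x_t)-f(x_t)\bigr].
\]
The first and third brackets are each at most $BL\eta_t$ by Step 1. The middle bracket is at most $\max_x f_t(x)-f_t(x_t)\le\delta$ by hypothesis. Summing gives $\delta+2BL\eta_t$.

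\textbf{Main obstacle.} The delicate part is Step 1, specifically justifying that $\tilde f$ lies in the class over which $f_t$ is defined. This depends on the appendix's precise definition of ``representable through $g_t$ with RKHS norm at most $B$.''

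If the class is defined as functions $x\mapsto h(g_t(x))$ with $h\in\mathcal{H}_Z$ and $\|h\|\le B$, then $h=\langle w,\phi(\cdot)\rangle$ works directly. If it is instead defined via the pullback RKHS of $k_Z(g_t(\cdot),g_t(\cdot))$, I would use the fact that the pullback norm of $h\circ g_t$ is at most $\|h\|_{\mathcal{H}_Z}$. That fact holds regardless of whether $g_t$ is injective, by the standard restriction/composition property of RKHSs.

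A secondary point is existence of the minimizer $f_t$. If it is not attained, replacing $f_t$ by an infimizing sequence gives the same bound up to an arbitrarily small slack, and the slack vanishes in the limit.
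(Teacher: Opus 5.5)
Your proposal is correct and matches the paper's proof: Lemma~\ref{lem:f_approximation} uses the same oracle-weight surrogate $\bar f_t(x)=\langle w^*,\phi(g_t(x))\rangle_{\mathcal{H}_Z}$, the best-approximation property of $f_t$, Cauchy--Schwarz and Assumptions~\ref{ass:smoothness}--\ref{ass:embed} to obtain $\sup_{x}|f(x)-f_t(x)|\le BL\eta_t$, and then the same three-term decomposition gives the theorem. Your added remarks on the minimum-norm interpolant and on attainment of $f_t$ only fill in details the paper asserts without proof. Attainment in fact holds: since $\mathcal{X}$ is finite, the objective depends only on the projection of $w$ onto the finite-dimensional span of $\{\phi(g_t(x))\}$, so the infimum over the norm ball is a minimum and no slack argument is needed.
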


Several insights follow from this result.
The term $\delta$ captures the optimization error within the approximated objective induced by the elicited embedding.
The term $2BL\eta_t$ bounds the representation error: the price paid for optimizing through $g_t$ rather than the oracle embedding $g^*$.
The RKHS norm $B$ and Lipschitz constant $L$ show that this price is smaller when the objective varies smoothly in the latent semantic space.
This aligns with the regime where aggregate prompt metrics are expected to be amenable to BO: small semantic changes in a system prompt can produce gradual changes in average downstream performance.

Re-elicitation can be viewed as a mechanism for reducing $\eta_t$ as data accumulate.
Each new batch gives the LLM additional evidence about which semantic properties distinguish high- and low-performing prompts.
When this evidence leads to an embedding closer to the oracle performance-relevant representation, the bound tightens.
Thus, the theorem identifies a concrete pathway by which dynamic elicitation can improve BO over prompts: by reducing representation error, it tightens the guarantee connecting feature-space optima to true prompt quality.

\section{Experiments}

\subsection{Setup}
We evaluate \method on ten system prompt optimization tasks under the aggregate feedback interface of Section~\ref{sec:problem-setting}. 
We study this aggregate feedback interface in a controlled offline setting.
Our experiments instantiate the objective $f(x)$ using benchmark accuracy on fixed evaluation sets, but the optimizer observes only a single scalar score for each prompt. It does not observe per-example labels, individual failures, answer traces, or textual critiques. This protocol isolates the methodological question we target: how can one perform sample-efficient search over system prompts when only prompt-level aggregate feedback is available?

\paragraph{Baselines.}
We compare against aggregate-only adaptations of four representative hard-prompt search methods: APE-style sampling~\citep{zhou2022large}, which samples history-free prompts; OPRO~\citep{yang2023large}, which conditions generation on score-sorted prompt histories; PromptBreeder~\citep{fernando2023promptbreeder}, which mutates and recombines prompts from a fitness-sorted population; and TextGrad-style refinement~\citep{yuksekgonul2024textgrad}, which critiques the scalar prompt-score trajectory and proposes variants.
These baselines test whether embedding by elicitation and BO-guided realization improve over common LLM-guided hard-prompt search when all methods receive the same prompt-level scalar feedback.
Closely related BO-based prompt optimizers are discussed in Section~\ref{sec:related_work}; many target soft prompts, finite instruction--demonstration configurations, candidate pools, fixed embeddings, or multi-fidelity validation subsets rather than deployable hard system prompts.
All baselines share the task context, initial evaluated dataset $\mathcal{D}_0$, scalar-score history, and target-model evaluation budget with \method.
Full aggregate-only baseline adaptations and prompts are given in Appendix~\ref{sec:baselines_details}.

\paragraph{Benchmarks and protocol.}
The tasks are drawn from GSM8K~\citep{cobbe2021training}, MMLU~\citep{hendryckstest2021}, and BIG-Bench Hard (BBH)~\citep{suzgun2022challenging}.
GSM8K and MMLU use fixed 500-question subsamples; the remaining eight tasks are BBH tasks with 250 examples each.
Appendix~\ref{sec:benchmarks_details} lists the task descriptions and optimizer-facing task contexts.
Each run has budget $N=30$ target-model evaluations: $q=5$ prompts per batch across $T=6$ evaluated batches, where the first batch is the shared seed dataset $\mathcal{D}_0$ generated by Algorithm~\ref{alg:d0} and the remaining five batches are optimization rounds.
Unless otherwise noted, we report means and 95\% confidence intervals over 30 independent seeds.

\paragraph{LLM use and evaluation.}
LLMs are core experimental components.
The optimizer LLM is Llama 3.3 70B Instruct, and the target LLM is Llama 3.1 8B Instruct.
In \method, the optimizer LLM elicits feature axes, extracts prompt coordinates, realizes BO-selected feature targets as prompts, and refines feature gaps; in the baselines, the same optimizer LLM generates candidates or critiques according to the corresponding baseline.
The target LLM is queried only through the evaluation function $f$: each evaluation runs the target model with system prompt $x$ on the fixed benchmark subset using zero-shot greedy decoding and returns aggregate accuracy.
We use the smaller target model to avoid task saturation and keep evaluation tractable in terms of wall time and token usage.

Our primary budget is the number of target-model evaluations, since each evaluation runs the target model over hundreds of benchmark examples.
\method uses additional optimizer-side LLM calls for elicitation, extraction, realization, and refinement, so the experiments evaluate sample efficiency in target evaluations rather than total LLM-call efficiency. In practice, the cost of optimization is typically dwarfed by the cost of evaluating $f$, which often requires large scale online A/B testing across large user populations.  
Shared history subsampling, optimizer-side temperatures, hyperparameters, information-access controls, and exact prompts are reported in Appendices~\ref{app:eval_protocol} and \ref{app:prompts}.

\subsection{Main Result}
Table~\ref{tab:main-performance} reports the final best score achieved by each method under the shared 30-evaluation budget.
For each seed and method, we take the best prompt found by the end of optimization and report the mean and 95\% confidence interval across seeds.
\method has the strongest overall performance profile: it is either the numerically best method or statistically indistinguishable from the best method on all ten tasks.
The absolute margins vary across tasks, which is expected in this small-budget setting, but the aggregate comparison in Table~\ref{tab:win-or-tie} shows that the effect is consistent across runs.
Across task-seed pairs, \method on average matches or exceeds APE, OPRO, PromptBreeder, and TextGrad as baselines.
This suggests that the elicited representation and BO-guided realization loop improve robustness across tasks rather than producing gains on only a single benchmark.

\begin{table}[ht]
\centering
\small
\begin{tabular}{llllll}
\toprule
 & GSM8K & MMLU & Boolean Expr. & Disambig. QA & Tracking \\
\midrule
\textbf{\method}& \textbf{0.833 $\pm$ 0.006} & \textbf{0.571 $\pm$ 0.031} & \textbf{0.768 $\pm$ 0.009} & \textbf{0.551 $\pm$ 0.006} & \textbf{0.204 $\pm$ 0.003} \\
APE & \textbf{0.830 $\pm$ 0.005} & \textbf{0.595 $\pm$ 0.016} & 0.719 $\pm$ 0.013 & 0.514 $\pm$ 0.008 & 0.196 $\pm$ 0.005 \\
OPRO & 0.818 $\pm$ 0.008 & 0.568 $\pm$ 0.014 & 0.650 $\pm$ 0.014 & 0.524 $\pm$ 0.008 & \textbf{0.205 $\pm$ 0.004} \\
PromptBreeder & \textbf{0.833 $\pm$ 0.005} & 0.508 $\pm$ 0.044 & 0.643 $\pm$ 0.017 & 0.516 $\pm$ 0.009 & 0.163 $\pm$ 0.015 \\
TextGrad & \textbf{0.827 $\pm$ 0.008} & 0.516 $\pm$ 0.044 & 0.669 $\pm$ 0.019 & 0.532 $\pm$ 0.009 & 0.186 $\pm$ 0.009 \\
\bottomrule
\end{tabular}
\vspace{1ex}

\begin{tabular}{llllll}
\toprule
 & Penguins & Causal Judg. & Formal Fall. & Snarks & Hyperbaton \\
\midrule
\textbf{\method} & \textbf{0.441 $\pm$ 0.010} & \textbf{0.584 $\pm$ 0.005} & \textbf{0.581 $\pm$ 0.007} & \textbf{0.551 $\pm$ 0.009} & \textbf{0.796 $\pm$ 0.006} \\
APE & \textbf{0.434 $\pm$ 0.001} & 0.570 $\pm$ 0.002 & 0.576 $\pm$ 0.002 & 0.527 $\pm$ 0.009 & 0.770 $\pm$ 0.008 \\
OPRO & \textbf{0.439 $\pm$ 0.004} & 0.571 $\pm$ 0.004 & \textbf{0.582 $\pm$ 0.002} & 0.537 $\pm$ 0.006 & 0.783 $\pm$ 0.007 \\
PromptBreeder & 0.293 $\pm$ 0.029 & 0.558 $\pm$ 0.006 & 0.546 $\pm$ 0.005 & 0.539 $\pm$ 0.007 & \textbf{0.802 $\pm$ 0.008} \\
TextGrad & 0.331 $\pm$ 0.024 & 0.560 $\pm$ 0.007 & 0.539 $\pm$ 0.006 & \textbf{0.554 $\pm$ 0.008} & 0.791 $\pm$ 0.007 \\
\bottomrule
\end{tabular}
\vspace{0.05in}
\caption{Main performance comparison.
Entries are final best scores after 30 prompt evaluations, reported as mean $\pm$ 95\% confidence interval over seeds.
Bold cells indicate the numerical best method and methods not significantly different from the best.
\method is statistically better or tied for best on all tasks and has the strongest aggregate pairwise win-or-tie profile in Table~\ref{tab:win-or-tie}.}
\label{tab:main-performance}
\vspace{-0.2in}
\end{table}

\begin{table}[ht]
\centering
\small
\begin{tabular}{l|ccccc|c}
\toprule
 & ReElicit & APE & OPRO & PromptBreeder & TextGrad & Avg \\
\midrule
\textbf{ReElicit} & \textbf{--} & \textbf{0.78} & \textbf{0.79} & \textbf{0.85} & \textbf{0.82} & \textbf{0.81} \\
APE & 0.30 & -- & 0.60 & 0.71 & 0.63 & 0.56 \\
OPRO & 0.31 & 0.66 & -- & 0.71 & 0.63 & 0.58 \\
PromptBreeder & 0.22 & 0.44 & 0.45 & -- & 0.48 & 0.40 \\
TextGrad & 0.26 & 0.48 & 0.51 & 0.70 & -- & 0.49 \\
\bottomrule
\end{tabular}
\vspace{0.05in}
\caption{Pairwise win-or-tie rate over final optimization results.
Each cell is the fraction of task-seed pairs where the row method matches or exceeds the column method in final best score.
Ties count for both methods, so opposite-direction entries need not sum to one.
The final column averages each row's off-diagonal entries.
\method has the highest average win-or-tie rate, indicating the most consistent aggregate performance across baselines.}
\label{tab:win-or-tie}
\vspace{-0.3in}
\end{table}

\subsection{Component Analysis}
\label{sec:component_analysis}

We next ask whether the elicited representation has the properties needed for BO over prompts: repeatable feature extraction, conservative adaptation across rounds, low dimensionality, predictive surrogate fit, and actionable targets for prompt generation.

\paragraph{Feature generation and extraction.}
A useful dynamic representation should be consistent under repeated extraction, but still able to adapt as new prompt-score evidence arrives.
We measure consistency by re-extracting feature values three times under the same selected feature definitions, and measure adaptation using cross-iteration linear centered kernel alignment (CKA)~\citep{kornblith2019similarity} on prompts shared across adjacent histories.

\begin{figure}[ht]
\centering
  \begin{subfigure}[ht]{0.45\linewidth}
    \centering
    \includegraphics[width=\linewidth]{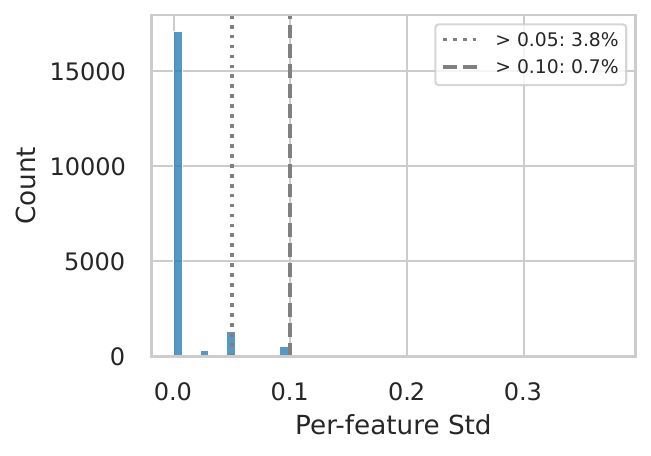}
    \caption{Per-feature std}
    \label{fig:per-feature-std}
  \end{subfigure}
  \hfill
  \begin{subfigure}[ht]{0.45\linewidth}
    \centering
    \includegraphics[width=\linewidth]{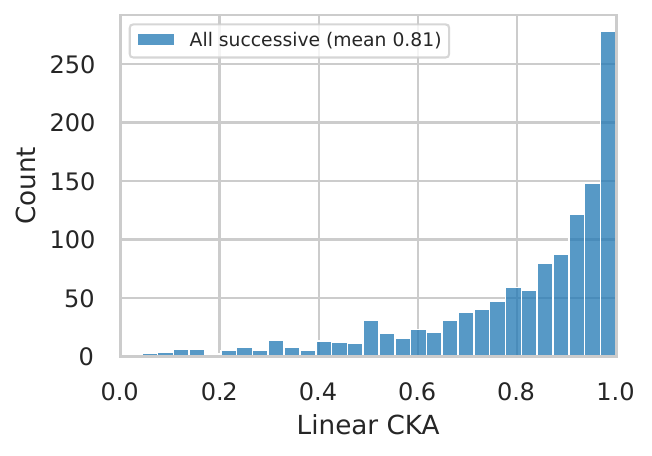}
    \caption{Cross-iteration linear CKA}
    \label{fig:cka-stability}
  \end{subfigure}
\caption{Feature stability and conservative adaptation.
(a) Repeated extraction is highly stable: fewer than 4\% of feature values have standard deviation above 0.05, and 0.7\% exceed 0.1.
(b) Cross-iteration CKA has mean 0.81, indicating high but imperfect alignment between successive selected representations.}
\label{fig:feature-stability}
\vspace{-0.1in}
\end{figure}

Figure~\ref{fig:feature-stability} shows that extraction noise is small and that re-elicited representations adapt conservatively.
The CKA distribution is highly aligned but not concentrated at one, suggesting that the representation preserves enough geometry for BO to accumulate evidence while still changing as additional prompt-score pairs reveal new semantic distinctions.
Appendix Table~\ref{tab:case-study} gives a qualitative example of the tasks evaluated: selected features evolve across iterations while cross-validation MSE decreases.

\paragraph{Predictive features and actionable targets.}
The BO loop needs the elicited feature space to be useful in two senses.
First, the representation must make prompt scores predictable from very small histories: during optimization, the GP surrogate is fit with only 5 to 25 observed prompt-score pairs.
Second, BO-selected feature targets must be realizable as natural-language prompts in a way that matters for downstream improvement.

\begin{figure}[ht]
\centering
  \begin{subfigure}[ht]{0.42\linewidth}
    \centering
    \includegraphics[width=\linewidth]{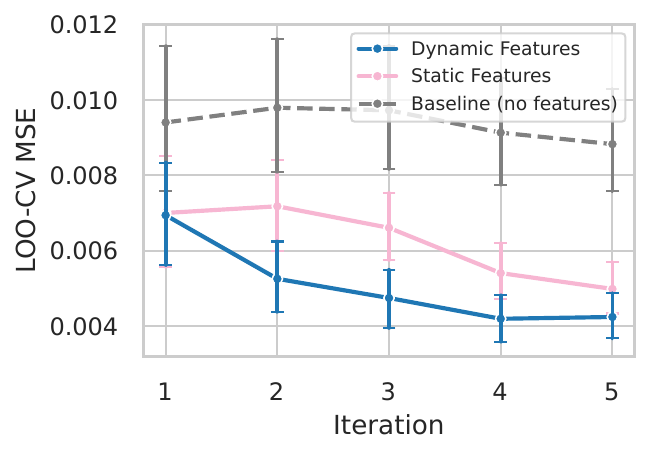}
    \caption{GP surrogate fit}
    \label{fig:gp-fit}
  \end{subfigure}
  \hfill
  \begin{subfigure}[ht]{0.43\linewidth}
    \centering
    \includegraphics[width=\linewidth]{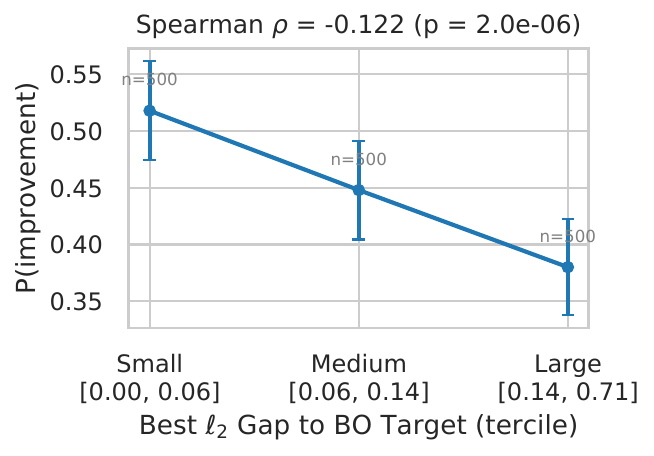}
    \caption{Gap vs.\ improvement probability}
    \label{fig:gap-vs-improve}
  \end{subfigure}
\caption{Predictiveness and actionability of the elicited feature space.
(a) Dynamic features yield lower GP cross-validation MSE than static initial features. Both are significantly better than a mean-predictor baseline.
(b) Smaller final $\ell_2$ gaps to BO targets are associated with higher improvement probability.}
\label{fig:predictive-actionable}
\vspace{-0.2in}
\end{figure}

Figure~\ref{fig:predictive-actionable} shows that the elicited representation is useful both for modeling and for generation.
Dynamic elicited features reduce GP cross-validation MSE relative to static initial features and a mean-predictor baseline, supporting the value of re-eliciting features as new prompt-score evidence accumulates.
Separately, prompts whose final extracted features are closer to the BO-selected target are more likely to improve the best-so-far score in the next evaluation. 
This supports feature-gap refinement as a useful mechanism for translating BO targets back into deployable prompts.
Appendix~\ref{app:additional-results} reports additional diagnostics and analyses.

\subsection{Ablation Study}
\label{sec:ablation}

We ablate four design choices.
\textbf{No Refinement} accepts the initial realization of each BO target without feature-gap refinement.
\textbf{No BO} replaces acquisition optimization with uniform sampling in the elicited feature space.
\textbf{Static Features} freezes the first selected feature set and removes dynamic re-elicitation.
\textbf{Independent Extraction} rates one prompt at a time instead of extracting features jointly in batches.

Figure~\ref{fig:ablation-convergence} summarizes convergence across the ten benchmarks, and Appendix Table~\ref{tab:ablation-paired-delta} reports paired final-score differences.
The clearest performance contributions come from feature-gap refinement and BO target selection: removing refinement or replacing BO with random feature-space sampling produces the largest drops in some of the best-so-far optimization curves as well as in final scores.
These results support the two main algorithmic roles of the loop: realizing BO-selected semantic targets as prompts, and using uncertainty-aware acquisition rather than unguided feature-space sampling.
Freezing the feature set also decreases optimization performance, but the effect is milder than removing refinement or replacing BO with random feature-space sampling. This suggests that even the initial dataset often lets the LLM elicit semantically meaningful features that support effective optimization. Although static features are less predictive than their dynamically re-elicited counterparts, as shown in Figure~\ref{fig:gp-fit}, GPs fitted on static features still achieve low cross-validation MSE relative to the baseline.

Independent Extraction isolates an implementation choice rather than a core algorithmic component.
Its similar final performance suggests that joint extraction preserves optimization quality while reducing feature-extraction cost.
With extraction batch size $b=10$, extracting features for $n$ prompts requires $\lceil n/b\rceil$ LLM calls per feature set rather than $n$ calls; at the largest history size in our experiments, this is 3 calls rather than 25 per feature set.

\begin{figure}[ht]
\centering
\includegraphics[width=\textwidth]{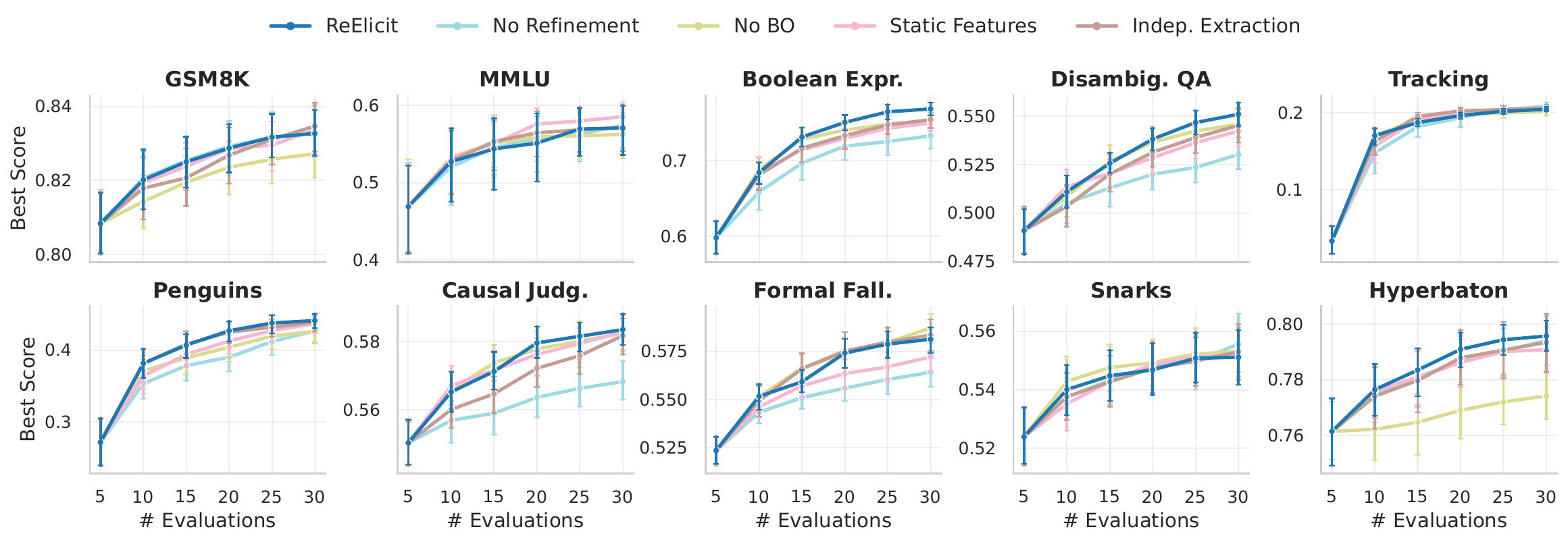}
\caption{Ablation convergence curves.
Removing feature-gap refinement or replacing BO with random feature-space sampling produces the clearest degradation.
Static features and independent extraction yield smaller differences, which are quantified in Appendix Table~\ref{tab:ablation-paired-delta}.}
\label{fig:ablation-convergence}
\vspace{-0.2in}
\end{figure}

\section{Discussion}

\method treats system-prompt tuning as black-box optimization with a learned semantic coordinate system. Prompts are discrete text objects, but the search directions that matter are semantic, task-dependent, and unavailable in advance. \method uses the optimizer LLM to elicit compact feature axes from prompt-score history, BO to select target coordinates, and the LLM again to realize and feature-gap-refine those targets into deployable prompts. This couples LLM prior knowledge and text generation with BO's uncertainty-aware search under scarce aggregate evaluations.

Across ten tasks, \method has the strongest aggregate profile: the highest pairwise win-or-tie rate and performance that is numerically best or not significantly worse than the numerical best on every task. Diagnostics support the representation view: elicited spaces are stable under repeated extraction but adapt across rounds, improve GP surrogate fit, and yield BO targets whose realized feature gaps predict improvement. Ablations show the largest contributions from BO target selection and feature-gap refinement, while dynamic re-elicitation improves the surrogate's predictive structure.

The main limitations concern evaluation scope.
For reproducibility and controlled comparison, we instantiate aggregate feedback using offline benchmark accuracy on GSM8K, MMLU, and BBH.
This is standard in prompt-optimization work and lets us compare methods under identical scalar-feedback budgets, but it is only a proxy for motivating deployment objectives such as user satisfaction, retention, or safety-incident rate.
Testing those objectives would require live systems, changing user populations, and less standardized baselines.

Related APO/BO methods discussed in Section~\ref{sec:related_work} often optimize different objects or assume different interfaces, such as soft prompts, instruction-demonstration program spaces, candidate pools, fixed embeddings, validation subsets, or instance-level feedback.
Adapting them would change what information the optimizer receives or what search space is being optimized, so our empirical claim is an apples-to-apples comparison within the aggregate-only hard-prompt setting rather than a universal prompt-optimization leaderboard. 
\method also uses additional optimizer-side LLM calls. Therefore, our efficiency claim is target-evaluation efficiency, most relevant when each evaluation is a costly aggregate measurement.

More broadly, embedding by elicitation suggests a recipe for optimizing language-describable structured artifacts.
Whenever candidates can be specified in text, constraints can be expressed textually, and evaluation is an expensive scalar measurement, an LLM can construct an interpretable adaptive search space while BO performs sample-efficient exploration within it.
In principle, this pattern could apply beyond system prompts to other modalities (such as images, audio, or video by using multi-modal models) and other applications such as agentic tool-use instructions or evaluation rubrics.
\vspace{-0.1in}

\paragraph{Broader impacts.}
System prompt optimization can improve reliability and controllability by making prompt tuning more systematic under limited feedback.
The main risks are objective-driven: like other black-box optimizers, the method could optimize harmful goals or over-optimize a poorly specified metric in ways that weaken unmeasured safety, fairness, privacy, or robustness constraints. 
Deployment-facing use should therefore treat metric design as part of the safety problem, with human review, held-out evaluation, safety-relevant slices, robustness checks, and explicit tests that gains on the target metric do not come from violating constraints omitted from the aggregate objective.
Our experiments are offline benchmark evaluations and do not deploy optimized prompts to users.

\bibliography{references}

@inproceedings{balandat2020botorch,
  title={{BoTorch: A Framework for Efficient Monte-Carlo Bayesian Optimization}},
  author={Balandat, Maximilian and Karrer, Brian and Jiang, Daniel R. and Daulton, Samuel and Letham, Benjamin and Wilson, Andrew Gordon and Bakshy, Eytan},
  booktitle = {Advances in Neural Information Processing Systems 33},
  year={2020},
  url = {http://arxiv.org/abs/1910.06403}
}

@article{wang2016bayesian,
  title={Bayesian optimization in a billion dimensions via random embeddings},
  author={Wang, Ziyu and Hutter, Frank and Zoghi, Masrour and Matheson, David and De Feitas, Nando},
  journal={Journal of Artificial Intelligence Research},
  volume={55},
  pages={361--387},
  year={2016}
}

@inproceedings{ramnath2025systematic,
  title={A systematic survey of automatic prompt optimization techniques},
  author={Ramnath, Kiran and Zhou, Kang and Guan, Sheng and Mishra, Soumya Smruti and Qi, Xuan and Shen, Zhengyuan and Wang, Shuai and Woo, Sangmin and Jeoung, Sullam and Wang, Yawei and others},
  booktitle={Proceedings of the 2025 Conference on Empirical Methods in Natural Language Processing},
  pages={33066--33098},
  year={2025}
}

@article{letham2020re,
  title={Re-examining linear embeddings for high-dimensional Bayesian optimization},
  author={Letham, Ben and Calandra, Roberto and Rai, Akshara and Bakshy, Eytan},
  journal={Advances in neural information processing systems},
  volume={33},
  pages={1546--1558},
  year={2020}
}

@inproceedings{kornblith2019similarity,
  title={Similarity of neural network representations revisited},
  author={Kornblith, Simon and Norouzi, Mohammad and Lee, Honglak and Hinton, Geoffrey},
  booktitle={International conference on machine learning},
  pages={3519--3529},
  year={2019},
  organization={PMLR}
}

@inproceedings{wilson2016deep,
  title={Deep kernel learning},
  author={Wilson, Andrew Gordon and Hu, Zhiting and Salakhutdinov, Ruslan and Xing, Eric P},
  booktitle={Artificial intelligence and statistics},
  pages={370--378},
  year={2016},
  organization={PMLR}
}

@article{maus2022local,
  title={Local latent space bayesian optimization over structured inputs},
  author={Maus, Natalie and Jones, Haydn and Moore, Juston and Kusner, Matt J and Bradshaw, John and Gardner, Jacob},
  journal={Advances in neural information processing systems},
  volume={35},
  pages={34505--34518},
  year={2022}
}

@article{maus2023joint,
  title={Joint composite latent space bayesian optimization},
  author={Maus, Natalie and Lin, Zhiyuan Jerry and Balandat, Maximilian and Bakshy, Eytan},
  journal={arXiv preprint arXiv:2311.02213},
  year={2023}
}

@inproceedings{zhou2022large,
  title={Large language models are human-level prompt engineers},
  author={Zhou, Yongchao and Muresanu, Andrei Ioan and Han, Ziwen and Paster, Keiran and Pitis, Silviu and Chan, Harris and Ba, Jimmy},
  booktitle={The eleventh international conference on learning representations},
  year={2022}
}

@inproceedings{yang2023large,
  title={Large language models as optimizers},
  author={Yang, Chengrun and Wang, Xuezhi and Lu, Yifeng and Liu, Hanxiao and Le, Quoc V and Zhou, Denny and Chen, Xinyun},
  booktitle={The Twelfth International Conference on Learning Representations},
  year={2023}
}

@article{yuksekgonul2024textgrad,
  title={Textgrad: Automatic" differentiation" via text},
  author={Yuksekgonul, Mert and Bianchi, Federico and Boen, Joseph and Liu, Sheng and Huang, Zhi and Guestrin, Carlos and Zou, James},
  journal={arXiv preprint arXiv:2406.07496},
  year={2024}
}

@inproceedings{pryzant2023automatic,
  title={Automatic prompt optimization with “gradient descent” and beam search},
  author={Pryzant, Reid and Iter, Dan and Li, Jerry and Lee, Yin and Zhu, Chenguang and Zeng, Michael},
  booktitle={Proceedings of the 2023 conference on empirical methods in natural language processing},
  pages={7957--7968},
  year={2023}
}

@article{fernando2023promptbreeder,
  title={Promptbreeder: Self-referential self-improvement via prompt evolution},
  author={Fernando, Chrisantha and Banarse, Dylan and Michalewski, Henryk and Osindero, Simon and Rockt{\"a}schel, Tim},
  journal={arXiv preprint arXiv:2309.16797},
  year={2023}
}

@inproceedings{opsahl2024optimizing,
  title={Optimizing instructions and demonstrations for multi-stage language model programs},
  author={Opsahl-Ong, Krista and Ryan, Michael J and Purtell, Josh and Broman, David and Potts, Christopher and Zaharia, Matei and Khattab, Omar},
  booktitle={Proceedings of the 2024 Conference on Empirical Methods in Natural Language Processing},
  pages={9340--9366},
  year={2024}
}

@article{wu2025llm,
  title={Llm prompt duel optimizer: Efficient label-free prompt optimization},
  author={Wu, Yuanchen and Verma, Saurabh and Lee, Justin and Xiong, Fangzhou and Zhang, Poppy and Awadelkarim, Amel and Chen, Xu and Yuan, Yubai and Hill, Shawndra},
  journal={arXiv preprint arXiv:2510.13907},
  year={2025}
}

@article{moss2025return,
  title={Return of the latent space COWBOYS: Re-thinking the use of VAEs for Bayesian optimisation of structured spaces},
  author={Moss, Henry B and Ober, Sebastian W and Diethe, Tom},
  journal={arXiv preprint arXiv:2507.03910},
  year={2025}
}

@article{chen2023instructzero,
    title={Instructzero: Efficient instruction optimization for black-box large language models}, 
    author={Chen, Lichang and Chen, Jiuhai and Goldstein, Tom and Huang, Heng and Zhou, Tianyi}, 
    journal={arXiv preprint arXiv:2306.03082}, 
    year={2023} 
}

@article{sabbatella2024bayesian,
  title={Bayesian optimization for instruction generation},
  author={Sabbatella, Antonio and Archetti, Francesco and Ponti, Andrea and Giordani, Ilaria and Candelieri, Antonio},
  journal={Applied Sciences},
  volume={14},
  number={24},
  pages={11865},
  year={2024},
  publisher={MDPI}
}

@article{ballew2025llm,
  title={LLM Based Bayesian Optimization for Prompt Search},
  author={Ballew, Adam and Wang, Jingbo and Ren, Shaogang},
  journal={arXiv preprint arXiv:2510.04384},
  year={2025}
}

@article{tomar2025exploratory,
  title={An Exploratory Study of Bayesian Prompt Optimization for Test-Driven Code Generation with Large Language Models},
  author={Tomar, Shlok and Deshwal, Aryan and Villalovoz, Ethan and Fazzini, Mattia and Cai, Haipeng and Doppa, Janardhan Rao},
  journal={arXiv preprint arXiv:2512.15076},
  year={2025}
}

@article{agrawal2025gepa,
  title={Gepa: Reflective prompt evolution can outperform reinforcement learning},
  author={Agrawal, Lakshya A and Tan, Shangyin and Soylu, Dilara and Ziems, Noah and Khare, Rishi and Opsahl-Ong, Krista and Singhvi, Arnav and Shandilya, Herumb and Ryan, Michael J and Jiang, Meng and others},
  journal={arXiv preprint arXiv:2507.19457},
  year={2025}
}

@inproceedings{agarwal2025searching,
  title={Searching for optimal solutions with LLMs via bayesian optimization},
  author={Agarwal, Dhruv and Arivazhagan, Manoj Ghuhan and Das, Rajarshi and Swamy, Sandesh and Khosla, Sopan and Gangadharaiah, Rashmi},
  booktitle={The Thirteenth International Conference on Learning Representations},
  year={2025}
}

@article{yen2025data,
  title={Data mixture optimization: A multi-fidelity multi-scale bayesian framework},
  author={Yen, Thomson and Siah, Andrew Wei Tung and Chen, Haozhe and Peng, Tianyi and Guetta, Daniel and Namkoong, Hongseok},
  journal={arXiv preprint arXiv:2503.21023},
  year={2025}
}

@inproceedings{olson2025ax,
  title={Ax: A platform for adaptive experimentation},
  author={Olson, Miles and Santorella, Elizabeth and Tiao, Louis C and Cakmak, Sait and Garrard, Mia and Daulton, Samuel and Lin, Zhiyuan Jerry and Ament, Sebastian and Beckerman, Bernard and Onofrey, Eric and others},
  booktitle={AutoML 2025 ABCD Track},
  year={2025}
}

@article{letham2019constrained,
  title={Constrained Bayesian optimization with noisy experiments},
  author={Letham, Benjamin and Karrer, Brian and Ottoni, Guilherme and Bakshy, Eytan},
  journal={Bayesian Analysis},
  year={2019}
}

@article{shahriari2015taking,
  title={Taking the human out of the loop: A review of Bayesian optimization},
  author={Shahriari, Bobak and Swersky, Kevin and Wang, Ziyu and Adams, Ryan P and De Freitas, Nando},
  journal={Proceedings of the IEEE},
  volume={104},
  number={1},
  pages={148--175},
  year={2015},
  publisher={IEEE}
}

@article{oh2019combinatorial,
  title={Combinatorial bayesian optimization using the graph cartesian product},
  author={Oh, Changyong and Tomczak, Jakub and Gavves, Efstratios and Welling, Max},
  journal={Advances in Neural Information Processing Systems},
  volume={32},
  year={2019}
}

@article{moss2020boss,
  title={Boss: Bayesian optimization over string spaces},
  author={Moss, Henry and Leslie, David and Beck, Daniel and Gonzalez, Javier and Rayson, Paul},
  journal={Advances in neural information processing systems},
  volume={33},
  pages={15476--15486},
  year={2020}
}

@article{griffiths2023gauche,
  title={GAUCHE: a library for Gaussian processes in chemistry},
  author={Griffiths, Ryan-Rhys and Klarner, Leo and Moss, Henry and Ravuri, Aditya and Truong, Sang and Du, Yuanqi and Stanton, Samuel and Tom, Gary and Rankovic, Bojana and Jamasb, Arian and others},
  journal={Advances in Neural Information Processing Systems},
  volume={36},
  pages={76923--76946},
  year={2023}
}

@article{deshwal2021combining,
  title={Combining latent space and structured kernels for Bayesian optimization over combinatorial spaces},
  author={Deshwal, Aryan and Doppa, Jana},
  journal={Advances in neural information processing systems},
  volume={34},
  pages={8185--8200},
  year={2021}
}

@article{gomez2018automatic,
  title={Automatic chemical design using a data-driven continuous representation of molecules},
  author={G{\'o}mez-Bombarelli, Rafael and Wei, Jennifer N and Duvenaud, David and Hern{\'a}ndez-Lobato, Jos{\'e} Miguel and S{\'a}nchez-Lengeling, Benjam{\'\i}n and Sheberla, Dennis and Aguilera-Iparraguirre, Jorge and Hirzel, Timothy D and Adams, Ryan P and Aspuru-Guzik, Al{\'a}n},
  journal={ACS central science},
  volume={4},
  number={2},
  pages={268--276},
  year={2018},
  publisher={ACS Publications}
}

@article{schneider2024hyperband,
  title={Hyperband-based Bayesian optimization for black-box prompt selection},
  author={Schneider, Lennart and Wistuba, Martin and Klein, Aaron and Golebiowski, Jacek and Zappella, Giovanni and Merra, Felice Antonio},
  journal={arXiv preprint arXiv:2412.07820},
  year={2024}
}

@misc{cobbe2021training,
      title={Training Verifiers to Solve Math Word Problems},
      author={Karl Cobbe and Vineet Kosaraju and Mohammad Bavarian and Jacob Hilton and Reiichiro Nakano and Christopher Hesse and John Schulman},
      year={2021},
      eprint={2110.14168},
      archivePrefix={arXiv},
      primaryClass={cs.LG}
}

@article{hendryckstest2021,
  title={Measuring Massive Multitask Language Understanding},
  author={Dan Hendrycks and Collin Burns and Steven Basart and Andy Zou and Mantas Mazeika and Dawn Song and Jacob Steinhardt},
  journal={Proceedings of the International Conference on Learning Representations (ICLR)},
  year={2021}
}

@article{suzgun2022challenging,
  title={Challenging BIG-Bench Tasks and Whether Chain-of-Thought Can Solve Them},
  author={Suzgun, Mirac and Scales, Nathan and Sch{\"a}rli, Nathanael and Gehrmann, Sebastian and Tay, Yi and Chung, Hyung Won and Chowdhery, Aakanksha and Le, Quoc V and Chi, Ed H and Zhou, Denny and Wei, Jason},
  journal={arXiv preprint arXiv:2210.09261},
  year={2022}
}
\bibliographystyle{plainnat}

\clearpage

\appendix
\section*{Appendix}

\section{Proof of theoretical result}
\label{app:proof}
The theoretical analysis begins from Assumption \ref{ass:rkhs}, that the true objective function $f$ resides in the RKHS of the oracle kernel, $\mathcal{H}_{k^*}$. Because the oracle kernel is a pullback kernel, its RKHS $\mathcal{H}_{k^*}$ is the space of functions induced by the composition $f = h \circ g^*$, where $h \in \mathcal{H}_Z$. For any $f \in \mathcal{H}_{k^*}$, the norm is defined as $\|f\|_{k^*} = \inf \{ \|h\|_{\mathcal{H}_Z} : f = h \circ g^* \}$. For $\|f\|_{k^*} \leq B$, there exists a weight vector $w^* \in \mathcal{H}_Z$ with $\|w^*\|_{\mathcal{H}_Z} = \|f\|_{k^*} \leq B$ such that $f(x) = \langle w^*, \phi(g^*(x)) \rangle_{\mathcal{H}_Z}$, where $\phi$ is the feature map associated with $k_Z$.
The latent space $\mathcal{Z}$ is compact and contains the images of $g^*$ and all elicited embeddings under consideration.

The oracle embedding $g^*$ is, of course, unknown, and we do modeling and optimizing within the elicited embedding $g_t$. The embedding $g_t$ in general will not be able to fully represent $f$. We define the approximation error $\mathcal{E}_t$ as the infimum of the uniform distance between $f$ and any function representable in the elicited embedding with RKHS norm at most $B$:
\begin{equation*}
    \mathcal{E}_t = \inf_{w \in \mathcal{H}_Z, \|w\|_{\mathcal{H}_Z} \leq B} \sup_{x \in \mathcal{X}} |f(x) - \langle w, \phi(g_t(x)) \rangle|.
\end{equation*}
The norm constraint matches the complexity of the oracle representation, ensuring the bound reflects approximation error rather than overfitting. We define the approximated function $f_t$ as a minimizer of this quantity, and thus a function whose error equals $\mathcal{E}_t$. We now bound the representation error between the true objective function and the approximated function on the embedding.

\begin{lemma}\label{lem:f_approximation}
For all $x \in \mathcal{X}$, the pointwise difference between the true objective function and the approximated function on the embedding is bounded as
\begin{equation*}
    |f(x) - f_t(x)| \leq B L \eta_t.
\end{equation*}
\end{lemma}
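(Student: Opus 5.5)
The plan is to compare $f_t$ with an explicit competitor: the oracle weight vector $w^*$ composed with the elicited embedding $g_t$. By Assumption~\ref{ass:rkhs} and the pullback characterization above, there is $w^* \in \mathcal{H}_Z$ with $\|w^*\|_{\mathcal{H}_Z} \le B$ and $f(x) = \langle w^*, \phi(g^*(x))\rangle_{\mathcal{H}_Z}$ for all $x$. Define $\tilde f_t(x) = \langle w^*, \phi(g_t(x))\rangle_{\mathcal{H}_Z}$. Since $w^*$ satisfies the norm constraint, $\tilde f_t$ is feasible for the infimum defining $\mathcal{E}_t$, and therefore $\mathcal{E}_t \le \sup_{x} |f(x) - \tilde f_t(x)|$.

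The key estimate is a uniform bound on this quantity. For each $x \in \mathcal{X}$, Cauchy--Schwarz in $\mathcal{H}_Z$ gives
\begin{equation*}
|f(x) - \tilde f_t(x)| = |\langle w^*, \phi(g^*(x)) - \phi(g_t(x))\rangle| \le \|w^*\|_{\mathcal{H}_Z}\,\|\phi(g^*(x)) - \phi(g_t(x))\|_{\mathcal{H}_Z}.
\end{equation*}
Both $g^*(x)$ and $g_t(x)$ lie in $\mathcal{Z}$, so Assumption~\ref{ass:smoothness} applies. Combined with Assumption~\ref{ass:embed}, it bounds the right-hand side by $B \cdot L\|g^*(x) - g_t(x)\| \le BL\eta_t$. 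Taking the supremum over $x$ yields $\mathcal{E}_t \le BL\eta_t$.

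Finally, $f_t$ is defined as a minimizer of the same quantity, so its uniform error equals $\mathcal{E}_t$. Hence $|f(x) - f_t(x)| \le \sup_{x'}|f(x') - f_t(x')| = \mathcal{E}_t \le BL\eta_t$ for every $x$, which is the claim.

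The only delicate point is the existence of the minimizer $f_t$, i.e.\ that the infimum over the ball $\{\|w\|\le B\}$ is attained. I would address this by noting that $\mathcal{X}$ is finite, so the objective $w \mapsto \max_{x}|f(x) - \langle w, \phi(g_t(x))\rangle|$ is a maximum of finitely many weakly continuous convex functions. It is therefore weakly lower semicontinuous, and the closed ball in a Hilbert space is weakly compact, so a minimizer exists. This step can be sidestepped entirely: the bound holds for any function achieving error at most $\mathcal{E}_t$, and in fact for $\tilde f_t$ itself, so the lemma does not depend on attainment beyond the definition of $f_t$.
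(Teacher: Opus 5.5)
Your proposal is correct and follows the paper's proof essentially step for step. Like the paper, you compare $f_t$ against the competitor $\langle w^*, \phi(g_t(x))\rangle_{\mathcal{H}_Z}$, use the optimality of $f_t$ to get $\mathcal{E}_t \le \sup_x |f(x) - \langle w^*, \phi(g_t(x))\rangle_{\mathcal{H}_Z}|$, and then apply Cauchy--Schwarz, Assumption~\ref{ass:smoothness}, and Assumption~\ref{ass:embed}; your weak-compactness argument for existence of the minimizer is a valid small addition that the paper leaves implicit by simply defining $f_t$ as a minimizer.
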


\begin{proof}
Consider the following structural surrogate for $f$ that applies the oracle weights $w^*$ to the elicited embedding:
\begin{equation*}
    \bar{f}_t = \langle w^*, \phi(g_t(x)) \rangle_{\mathcal{H}_Z}.
\end{equation*}
Let $w_t' \in \mathcal{H}_Z$ be the weight vector corresponding to $f_t$. By the definition of $f_t$ as the best available approximation, we have that
\begin{align*}
    \mathcal{E}_t &= \sup_{x \in \mathcal{X}} |f(x) - \langle w_t', \phi(g_t(x)) \rangle_{\mathcal{H}_Z} |\\
    &\leq \sup_{x \in \mathcal{X}} |f(x) - \langle w^*, \phi(g_t(x)) \rangle_{\mathcal{H}_Z} |
\end{align*}
Thus, by the linearity of the inner product and the Cauchy-Schwarz inequality,
\begin{align*}
    \sup_{x \in \mathcal{X}} |f(x) - f_t(x)| &\leq \sup_{x \in \mathcal{X}} |f(x) - \langle w^*, \phi(g_t(x)) \rangle_{\mathcal{H}_Z} |\\
    &= \sup_{x \in \mathcal{X}} \Big| \langle w^*, \phi(g^*(x)) \rangle_{\mathcal{H}_Z} - \langle w^*, \phi(g_t(x)) \rangle_{\mathcal{H}_Z} \Big| \\
    &\leq \sup_{x \in \mathcal{X}} \|w^*\|_{\mathcal{H}_Z} \|\phi(g^*(x)) - \phi(g_t(x))\|_{\mathcal{H}_Z}.
\end{align*}

By Assumption \ref{ass:rkhs}, we have that $\|w^*\|_{\mathcal{H}_Z} \leq B$. Applying Assumption \ref{ass:smoothness} bounds the feature distance by the latent space distance, which is in turn bounded according to Assumption \ref{ass:embed}.
\begin{equation*}
  \|\phi(g^*(x)) - \phi(g_t(x))\|_{\mathcal{H}_Z} \leq L \|g^*(x) - g_t(x)\| \leq L \eta_t.
\end{equation*}

Substituting this bound back into the Cauchy-Schwarz inequality completes the proof.
\end{proof}

We next prove the main theorem, which is that every point in the suboptimal set for the approximated function $f_t$ is in a corresponding suboptimal set for $f$.
\begin{proof}[Proof of Theorem \ref{thm}.]
We decompose the optimality gap of $x_t$ with respect to $f$ using the approximated function $f_t$:
\begin{equation*}
    f(x^*) - f(x_t) = \big(f(x^*) - f_t(x^*)\big) + \big(f_t(x^*) - f_t(x_t)\big) + \big(f_t(x_t) - f(x_t)\big).
\end{equation*}
By Lemma \ref{lem:f_approximation}, the first and third terms are each bounded by $B L \eta_t$. For the middle term, we are considering the optimum of $f$ evaluated by $f_t$, which is bounded by the optimal value of $f_t$ itself:
\begin{equation*}
    f_t(x^*) - f_t(x_t) \leq \max_{x} f_t(x) - f_t(x_t) \leq \delta.
\end{equation*}
Summing these three upper bounds produces the desired result.
\end{proof}

\section{Additional Results}
\label{app:additional-results}

\subsection{Selected Feature Dimensionality}
The main text focuses on whether elicited features improve surrogate fit.
For completeness, Figure~\ref{fig:app-d-chosen} reports the selected feature dimensionality over optimization rounds.
The selected spaces typically contain two to three features, with a mild increasing trend as more prompt-score evidence becomes available.
This supports the intended use of a compact yet increasingly more complex representation for GP modeling under small evaluation budgets.
Despite the increase in embedding dimensionality, those features are becoming iteratively more predictive of observed scores, as shown in Figure~\ref{fig:gp-fit}.

\begin{figure}[ht]
\centering
\includegraphics[width=0.55\textwidth]{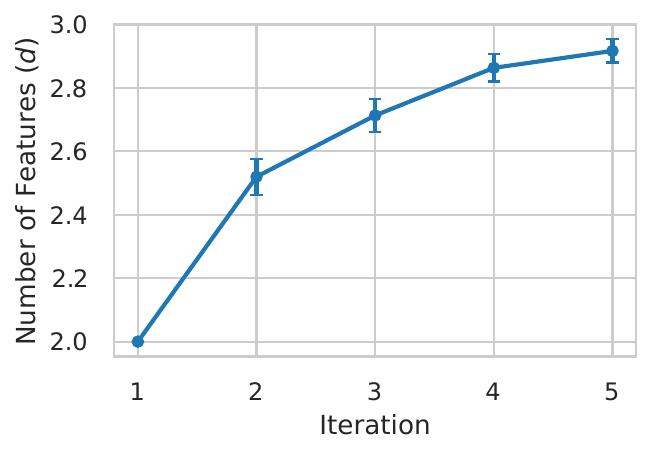}
\caption{Selected feature dimensionality over optimization rounds, reported as mean $\pm$ 95\% CI across tasks and seeds.
The elicited feature spaces typically contain two to three semantic dimensions.}
\label{fig:app-d-chosen}
\end{figure}

\subsection{Refinement Gap Trajectory}
Figure~\ref{fig:app-refinement-gap} reports the best realized-target $\ell_2$ gap over refinement steps.
This diagnostic checks that feature-gap feedback can effectively move generated prompts toward BO-selected feature targets.
As shown in Figure~\ref{fig:gap-vs-improve} from the main text, smaller $\ell_2$ gap is directly associated with the probability of observing an improvement in score in the next iteration.

\begin{figure}[ht]
\centering
\includegraphics[width=0.55\textwidth]{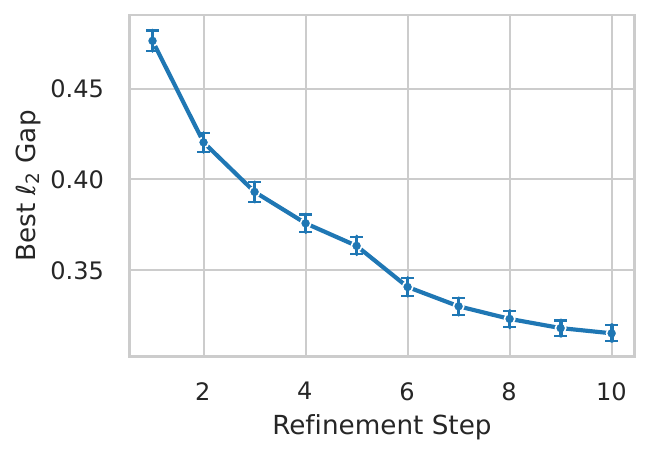}
\caption{Best $\ell_2$ gap between the generated prompt's extracted features and the BO-selected target over refinement steps.
The trajectory shows that the refinement loop moves generated prompts closer to target feature vectors under the allotted refinement budget.}
\label{fig:app-refinement-gap}
\end{figure}

\subsection{Ablation Final-Score Statistics}
\label{app:ablation-final-score-stats}

Figure~\ref{fig:ablation-convergence} shows the full convergence trajectories for the ablation variants.
To quantify the final-budget comparison, Table~\ref{tab:ablation-paired-delta} reports paired final-score differences at $N=30$ evaluations.
For each task and seed, we compute
\[
\Delta_{m,t,s}
=
\mathrm{score}_{m,t,s}
-
\mathrm{score}_{\method,t,s},
\]
where $m$ is an ablation variant.
Negative values therefore favor \method.
We pool the paired differences across task-seed pairs and test whether the mean paired difference is zero.

\begin{table}[h]
\centering
\small
\begin{tabular}{lcc}
\toprule
\textbf{Variant} & \textbf{Pooled paired $\Delta$ vs.\ \method} & \textbf{$p$} \\
\midrule
No Refinement & $-0.009 \pm 0.004$ & $<0.001$ \\
No BO & $-0.007 \pm 0.003$ & $<0.001$ \\
Static Features & $-0.003 \pm 0.004$ & $0.168$ \\
Independent Extraction & $-0.002 \pm 0.004$ & $0.329$ \\
\bottomrule
\end{tabular}
\vspace{2ex}
\caption{Pooled paired final-score differences between each ablation and \method across all task-seed pairs.
Each entry reports $\Delta=\mathrm{ablation}-\method$, so negative values favor \method.
The $\pm$ term denotes a 95\% confidence interval for the mean paired difference, and $p$-values are from two-sided paired $t$-tests.}
\label{tab:ablation-paired-delta}
\end{table}

The results support two main conclusions.
First, removing feature-gap refinement and replacing BO with random feature-space sampling produce the largest and statistically significant drops, indicating that both target realization and acquisition-guided search contribute to final performance.
Second, freezing the feature set and extracting prompts independently yield smaller negative point estimates that are not statistically significant under this pooled test.
Thus, the ablation evidence for dynamic re-elicitation should be interpreted together with the surrogate-fit diagnostic in Figure~\ref{fig:gp-fit}, while the independent-extraction result supports joint extraction as an efficiency improvement that does not appear to harm final performance.

Because this table pools heterogeneous benchmark tasks, the paired test should be interpreted as an aggregate diagnostic rather than a claim that every ablation degrades performance on every task.

\subsection{Feature Evolution Case Study}
Table~\ref{tab:case-study} gives a qualitative example of the tasks evaluated: selected features evolve across iterations while cross-validation MSE decreases.

\begin{table}[ht]
\centering
\scriptsize
\begin{tabular}{r r l p{0.51\linewidth}}
\toprule
\textbf{Iter} & \textbf{MSE} & \textbf{Feature} & \textbf{Description} \\
\midrule[\heavyrulewidth]
\textbf{1} & 0.0612 & explicitness\_of\_sarcasm\_cues & This feature measures the degree to which the prompt explicitly mentions cues for identifying sarcasm, such as 'ironic or mocking', 'tone, context, and figurative language', or 'exaggeration, understatement, and idiomatic expressions'. A value of 0 represents a prompt that does not mention any such cues, while a value of 1 represents a prompt that explicitly mentions multiple relevant cues. This feature causally affects performance because explicitly mentioning these cues guides the AI to focus on the most relevant aspects of the input statements for sarcasm detection. \\
 &  & focus\_on\_pragmatic\_analysis & This feature measures the extent to which the prompt emphasizes the importance of pragmatic analysis, such as considering 'the speaker's intent and the audience's potential interpretation' or 'assessing their semantic meaning and pragmatics'. A value of 0 indicates a prompt that does not emphasize pragmatic analysis, while a value of 1 represents a prompt that clearly directs the AI to consider the pragmatic aspects of the statements. This feature causally affects performance because pragmatic analysis is crucial for distinguishing between literal and sarcastic meanings. \\
\midrule[\heavyrulewidth]
\textbf{3} & 0.0199 & explicitness\_of\_sarcasm\_cues & This feature measures the degree to which the prompt explicitly mentions specific cues for identifying sarcasm, such as exaggeration, understatement, idiomatic expressions, irony, or mocking tone. A value of 0 represents a prompt that does not mention any specific cues, while a value of 1 represents a prompt that mentions multiple relevant cues. This feature causally affects performance because explicit cues guide the AI to focus on the most relevant aspects of the input statements for sarcasm detection. \\
 &  & simplicity\_of\_task\_description & This feature measures the extent to which the prompt provides a simple and direct description of the task, without requiring additional steps or complex analysis. A value of 0 indicates a prompt that requires the AI to perform additional tasks or analysis, while a value of 1 represents a prompt that clearly and directly describes the task. This feature causally affects performance because simplicity reduces ambiguity and guides the AI's analysis. \\
 &  & focus\_on\_literal\_vs\_figurative\_meaning & This feature measures the degree to which the prompt emphasizes the distinction between literal and figurative meanings in identifying sarcasm. A value of 0 represents a prompt that does not emphasize this distinction, while a value of 1 represents a prompt that clearly directs the AI to consider the contrast between literal and figurative meanings. This feature causally affects performance because understanding this distinction is crucial for identifying sarcasm. \\
\midrule[\heavyrulewidth]
\textbf{5} & 0.0119 & emphasis\_on\_literal\_figurative\_contrast & This feature measures the degree to which the prompt emphasizes understanding the contrast between literal and figurative meanings in identifying sarcasm. A value of 0 represents a prompt that does not emphasize this distinction, while a value of 1 represents a prompt that clearly directs the AI to consider the contrast between literal and figurative meanings. This feature causally affects performance because understanding this distinction is crucial for identifying sarcasm. \\
 &  & cue\_explicitness\_and\_brevity & This feature measures the degree to which the prompt explicitly mentions specific cues for identifying sarcasm (such as exaggeration, understatement, idiomatic expressions, irony, or mocking tone) in a concise manner. A value of 0 represents a prompt that either does not mention any specific cues or does so in a verbose or indirect way, while a value of 1 represents a prompt that mentions relevant cues clearly and briefly. This feature causally affects performance because explicit and concise cues guide the AI to focus on the most relevant aspects of the input statements for sarcasm detection without introducing unnecessary complexity. \\
 &  & avoidance\_of\_overcontextualization & This feature measures the degree to which the prompt avoids requiring the AI to heavily consider the broader context, speaker's intent, or audience interpretation beyond the explicit language and tone used. A value of 0 represents a prompt that emphasizes the importance of these contextual factors, while a value of 1 represents a prompt that focuses primarily on the statements themselves and the linguistic cues they contain. This feature causally affects performance because overcontextualization can lead to unnecessary complexity and ambiguity in sarcasm detection. \\
\bottomrule
\end{tabular}
\caption{Case study: features the LLM proposes at iterations 1, 3, 5 on Snarks. Dimensionality grows from $d=2$ at iteration~1 to $d=3$ at iteration 3 and 5.
GP LOO-CV MSE decreases each iteration as we continuously iterate and regenerate the features.}
\label{tab:case-study}
\end{table}

\section{Implementation Details}

\subsection{Algorithms}

\subsubsection{Initial Dataset Generation}

\begin{algorithm}[h]
\caption{Initial Dataset $\mathcal{D}_0$ Generation}
\label{alg:d0}

\KwIn{Task context $c$, batch size $q$, seed}
\KwOut{$\mathcal{D}_0 = \{(x_i, y_i)\}_{i=1}^q$}

$\{x_1, \ldots, x_q\} \leftarrow \text{LLM}(c, q)$
\Comment*[f]{$\mathcal{D}_0$ generation prompt: generate $q$ diverse system prompts}
\For{$i = 1, \ldots, q$}{
  $y_i \leftarrow f(x_i)$
  \Comment*[f]{evaluate sequentially}
}
$\mathcal{D}_0 \leftarrow \{(x_i, y_i)\}_{i=1}^q$\;
\KwReturn{$\mathcal{D}_0$}\;
\end{algorithm}

\subsubsection{\method Algorithm}
\label{sec:full_algo}

Algorithm~\ref{alg:reelicit-full} described full \method algorithm.

\begin{algorithm}[h]
\small
\caption{\method. Expanded version of Algorithm~\ref{alg:reelicit}.}
\label{alg:reelicit-full}

\KwIn{$\mathcal{D}_0 = \{(x_{0,j}, y_{0,j})\}_{j=1}^q$, total evaluated batches $T$, elicitation rounds $K$, batch size $q$, extraction batch size $b$, realization budget $M \geq 1$, tolerance $\tau$}
\KwOut{$x^* = \arg\max_{(x,y) \in \mathcal{D}_{T-1}} y$}

$\mathcal{F}_0 \leftarrow \varnothing$\;

\For{$t = 1, \ldots, T-1$}{

  Let $X_{t-1},Y_{t-1}$ be the prompts and scores in $\mathcal{D}_{t-1}$\;

  \BlankLine
  \Comment{Phase 1: Elicit features and build embedding $g_t$}
  \ParFor{$k = 1, \ldots, K$}{
    $\mathcal{F}_t^{(k)} \leftarrow \DefineFeatures(\mathcal{D}_{t-1}, \mathcal{F}_{t-1})$\;
    \Comment{\textsc{DefineFeatures} prompt}

    $Z_t^{(k)} \leftarrow \ExtractFeatures(X_{t-1}, \mathcal{F}_t^{(k)}, b)$\;
    \Comment{\textsc{ExtractFeatures} prompt; $z_i = g_{\mathcal{F}_t^{(k)}}(x_i)$; scores are not shown}

    $\text{score}_t^{(k)} \leftarrow \text{CV}(Z_t^{(k)}, Y_{t-1})$\;
  }

  $\mathcal{K}_t \leftarrow \{1,\ldots,K\}$\;

  \If{$t > 1$}{
    $\mathcal{F}_t^{(K+1)} \leftarrow \mathcal{F}_{t-1}$\;
    $Z_t^{(K+1)} \leftarrow \ExtractFeatures(X_{t-1}, \mathcal{F}_{t-1}, b)$\;
    $\text{score}_t^{(K+1)} \leftarrow \text{CV}(Z_t^{(K+1)}, Y_{t-1})$\;
    $\mathcal{K}_t \leftarrow \mathcal{K}_t \cup \{K+1\}$\;
    \Comment{Re-score the incumbent feature set on the current history}
  }

  $k^* \leftarrow \arg\min_{k \in \mathcal{K}_t} \text{score}_t^{(k)}$\;
  $\mathcal{F}_t \leftarrow \mathcal{F}_t^{(k^*)}$,\quad $Z_t \leftarrow Z_t^{(k^*)}$\;

  \BlankLine
  \Comment{Phase 2: Bayesian optimization in embedding space}
  Fit GP $\mathcal{M}_t$ on $(Z_t, Y_{t-1})$\;

  $\{z_{t,1}^{\mathrm{new}}, \ldots, z_{t,q}^{\mathrm{new}}\}
  \leftarrow
  \arg\max_{z_1,\ldots,z_q \in [0,1]^{d_t}}
  \alpha(z_1,\ldots,z_q \mid \mathcal{M}_t)$\;
  \Comment{BO selects feature targets, not text prompts}

  \BlankLine
  \Comment{Phase 3: Realize feature targets as prompts}
  \ParFor{$j = 1, \ldots, q$}{
    $M_{\mathrm{init}} \leftarrow \max(1, \lfloor M/2 \rfloor)$;\quad
    $M_{\mathrm{refine}} \leftarrow M - M_{\mathrm{init}}$\;

    \BlankLine
    \Comment{3a: Parallel initial generation}
    \ParFor{$p = 1, \ldots, M_{\mathrm{init}}$}{
      $\tilde{\mathcal{D}}_{p} \leftarrow \StratSample(\mathcal{D}_{t-1}, n_{\max})$\;
      $x^{(p)} \leftarrow \textsc{InitialGenerate}(\tilde{\mathcal{D}}_{p}, \mathcal{F}_t, z_{t,j}^{\mathrm{new}})$\;
      $z^{(p)} \leftarrow \ExtractFeatures(\{x^{(p)}\}, \mathcal{F}_t, b{=}1)$\;
    }

    $p^* \leftarrow \arg\min_p \|z_{t,j}^{\mathrm{new}} - z^{(p)}\|_2$\;
    $x_{\mathrm{best}}, z_{\mathrm{best}} \leftarrow x^{(p^*)}, z^{(p^*)}$\;

    \BlankLine
    \Comment{3b: Sequential refinement by feature-gap reduction}
    \For{$i = 1, \ldots, M_{\mathrm{refine}}$}{
      \lIf{$\|z_{t,j}^{\mathrm{new}} - z_{\mathrm{best}}\|_2 \leq \tau$}{\Break}

      $\Delta_\ell \leftarrow (z_{t,j}^{\mathrm{new}})_\ell - (z_{\mathrm{best}})_\ell$ for $\ell = 1, \ldots, d_t$\;
      Sort features by $|\Delta_\ell|$ descending\;

      $x_{\mathrm{new}} \leftarrow \textsc{FeatureGuidedRefine}(x_{\mathrm{best}}, \mathcal{F}_t, z_{t,j}^{\mathrm{new}}, z_{\mathrm{best}})$\;
      $z_{\mathrm{new}} \leftarrow \ExtractFeatures(\{x_{\mathrm{new}}\}, \mathcal{F}_t, b{=}1)$\;

      \lIf{$\|z_{t,j}^{\mathrm{new}} - z_{\mathrm{new}}\|_2 < \|z_{t,j}^{\mathrm{new}} - z_{\mathrm{best}}\|_2$}{
        $x_{\mathrm{best}}, z_{\mathrm{best}} \leftarrow x_{\mathrm{new}}, z_{\mathrm{new}}$
      }
    }

    $x_{t,j}^{\mathrm{new}} \leftarrow x_{\mathrm{best}}$\;
  }

  \BlankLine
  \Comment{Phase 4: Evaluate and update}
  $y_{t,j}^{\mathrm{new}} \leftarrow f(x_{t,j}^{\mathrm{new}})$ for each $j$\;
  $\mathcal{D}_{t}
  \leftarrow
  \mathcal{D}_{t-1}
  \cup
  \{(x_{t,j}^{\mathrm{new}}, y_{t,j}^{\mathrm{new}})\}_{j=1}^{q}$\;
}
\end{algorithm}

\subsubsection{Baselines}
\label{sec:baselines_details}

\paragraph{APE-style guided sampling} (Algorithm~\ref{alg:ape}, Section~\ref{prompt:ape}).
This is a history-free candidate-generation baseline that captures the core idea of APE: candidate generation plus evaluation.
At each iteration, the optimizer LLM generates $q$ diverse system prompts from the task description alone, without using any optimization history.
This is a reasonable simplification of APE, though not a full reproduction of the original search protocol.

\paragraph{OPRO} (Algorithm~\ref{alg:opro}, Section~\ref{prompt:opro}).
The main iterative baseline. At each iteration, OPRO presents a stratified subsample of the $(x, y)$ history sorted worst-to-best (placing the best prompts last for recency bias) and asks the optimizer LLM to generate $q$ improvements.
OPRO is the closest native fit among the evaluated baselines because it uses solution-score histories to generate improved candidates.

\paragraph{PromptBreeder} (Algorithm~\ref{alg:pb}, Sections~\ref{prompt:pb-mutation} and~\ref{prompt:pb-recombination}).
A population-based prompt evolution baseline inspired by PromptBreeder.
Maintains a fitness-sorted population (default size 20). Each iteration generates $q$ offspring: $q{-}1$ via mutation (one of three operators: improve clarity, make reasoning explicit, increase conciseness) and 1 via recombination of two random parents.
All $q$ LLM calls run concurrently. This is a natural match for the black-box setting because evolutionary methods require only fitness values.

\paragraph{TextGrad-style black-box refinement} (Algorithm~\ref{alg:tg}, Section~\ref{prompt:textgrad}).
A critique-then-improve baseline inspired by TextGrad.
Presents a stratified trajectory sample plus the current best prompt, then follows a 3-step chain: (1) analyze trajectory patterns, (2) critique the best prompt, (3) generate $q$ improved variants addressing different critique aspects.
This adaptation requires the largest interface change because original TextGrad benefits from richer structured feedback, such as instance-level errors or textual gradients. We therefore label it “TextGrad-style” to be transparent about the aggregate-only simplification.

\paragraph{Shared infrastructure.}
The stratified subsampling utility is shared by \method (\textsc{DefineFeatures}, \textsc{GenerateWithRefinement}), OPRO, and TextGrad, ensuring all methods see the same in-context distribution.
When history exceeds $n_{\max}$, it draws the top 25\% by score, bottom 25\%, and a random sample of the middle 50\%.
OPRO and TextGrad sort the subsample ascending by score (best last) for recency bias.

\begin{algorithm}[h]
\caption{APE-style Guided Sampling}
\label{alg:ape}

\KwIn{Initial evaluated dataset $\mathcal{D}_0$, total evaluated batches $T$, task context $c$, batch size $q$}
\KwOut{Best prompt $x^*$}

\For{$t = 1, \ldots, T-1$}{
  $\{x_{t,1}^{\mathrm{new}}, \ldots, x_{t,q}^{\mathrm{new}}\} \leftarrow \text{LLM}(c, q)$
  \Comment*[f]{APE prompt; history is not used}

  Evaluate $y_{t,j}^{\mathrm{new}} \leftarrow f(x_{t,j}^{\mathrm{new}})$ for $j = 1, \ldots, q$\;

  $\mathcal{D}_{t}
  \leftarrow
  \mathcal{D}_{t-1}
  \cup
  \{(x_{t,j}^{\mathrm{new}}, y_{t,j}^{\mathrm{new}})\}_{j=1}^{q}$\;
}

\KwReturn{$x^* = \arg\max_{(x,y) \in \mathcal{D}_{T-1}} y$}\;
\end{algorithm}

\begin{algorithm}[h]
\caption{OPRO: History-Conditioned Prompt Generation}
\label{alg:opro}

\KwIn{Initial evaluated dataset $\mathcal{D}_0$, total evaluated batches $T$, task context $c$, batch size $q$, context cap $n_{\max}$}
\KwOut{Best prompt $x^*$}

\For{$t = 1, \ldots, T-1$}{
  $\tilde{\mathcal{D}}_{t-1} \leftarrow \StratSample(\mathcal{D}_{t-1}, n_{\max})$\;

  Sort $\tilde{\mathcal{D}}_{t-1}$ by score ascending (worst $\to$ best)\;

  $\{x_{t,1}^{\mathrm{new}}, \ldots, x_{t,q}^{\mathrm{new}}\}
  \leftarrow
  \text{LLM}(c, \tilde{\mathcal{D}}_{t-1}, q)$
  \Comment*[f]{OPRO prompt}

  Evaluate $y_{t,j}^{\mathrm{new}} \leftarrow f(x_{t,j}^{\mathrm{new}})$ for $j = 1, \ldots, q$\;

  $\mathcal{D}_{t}
  \leftarrow
  \mathcal{D}_{t-1}
  \cup
  \{(x_{t,j}^{\mathrm{new}}, y_{t,j}^{\mathrm{new}})\}_{j=1}^{q}$\;
}

\KwReturn{$x^* = \arg\max_{(x,y) \in \mathcal{D}_{T-1}} y$}\;
\end{algorithm}

\begin{algorithm}[h]
\caption{PromptBreeder: Population-Based Prompt Evolution}
\label{alg:pb}

\KwIn{Initial evaluated dataset $\mathcal{D}_0$, total evaluated batches $T$, task context $c$, batch size $q$, population size $P_{\max}$}
\KwOut{Best prompt $x^*$}

\For{$t = 1, \ldots, T-1$}{
  Let $\mathcal{P}_{t-1}$ be the top-$P_{\max}$ entries of $\mathcal{D}_{t-1}$ sorted by score descending\;

  \ParFor{$j = 1, \ldots, q$}{
    \eIf{$j < q$}{
      $x_{\mathrm{parent}} \leftarrow$ random choice from $\mathcal{P}_{t-1}$\;

      $m \leftarrow$ random choice from \{clarity, reasoning, conciseness\}\;

      $x_{t,j}^{\mathrm{new}} \leftarrow \text{LLM}(c, x_{\mathrm{parent}}, m)$
      \Comment*[f]{PromptBreeder mutation prompt}
    }{
      $x_{p_1}, x_{p_2} \leftarrow$ random pair from $\mathcal{P}_{t-1}$\;

      $x_{t,j}^{\mathrm{new}} \leftarrow \text{LLM}(c, x_{p_1}, x_{p_2})$
      \Comment*[f]{PromptBreeder recombination prompt}
    }
  }

  Evaluate $y_{t,j}^{\mathrm{new}} \leftarrow f(x_{t,j}^{\mathrm{new}})$ for $j = 1, \ldots, q$\;

  $\mathcal{D}_{t}
  \leftarrow
  \mathcal{D}_{t-1}
  \cup
  \{(x_{t,j}^{\mathrm{new}}, y_{t,j}^{\mathrm{new}})\}_{j=1}^{q}$\;
}

\KwReturn{$x^* = \arg\max_{(x,y) \in \mathcal{D}_{T-1}} y$}\;
\end{algorithm}

\begin{algorithm}[h]
\caption{TextGrad-style Black-Box Refinement}
\label{alg:tg}

\KwIn{Initial evaluated dataset $\mathcal{D}_0$, total evaluated batches $T$, task context $c$, batch size $q$, context cap $n_{\max}$}
\KwOut{Best prompt $x^*$}

\For{$t = 1, \ldots, T-1$}{
  $\tilde{\mathcal{D}}_{t-1} \leftarrow \StratSample(\mathcal{D}_{t-1}, n_{\max})$\;

  Sort $\tilde{\mathcal{D}}_{t-1}$ by score ascending (worst $\to$ best)\;

  $(x_{\mathrm{best}}, y_{\mathrm{best}})
  \leftarrow
  \arg\max_{(x,y) \in \mathcal{D}_{t-1}} y$
  \Comment*[f]{from full history}

  $\{x_{t,1}^{\mathrm{new}}, \ldots, x_{t,q}^{\mathrm{new}}\}
  \leftarrow
  \text{LLM}(c, \tilde{\mathcal{D}}_{t-1}, x_{\mathrm{best}}, y_{\mathrm{best}}, q)$
  \Comment*[f]{TextGrad-style prompt: analyze, critique, generate variants}

  Evaluate $y_{t,j}^{\mathrm{new}} \leftarrow f(x_{t,j}^{\mathrm{new}})$ for $j = 1, \ldots, q$\;

  $\mathcal{D}_{t}
  \leftarrow
  \mathcal{D}_{t-1}
  \cup
  \{(x_{t,j}^{\mathrm{new}}, y_{t,j}^{\mathrm{new}})\}_{j=1}^{q}$\;
}

\KwReturn{$x^* = \arg\max_{(x,y) \in \mathcal{D}_{T-1}} y$}\;
\end{algorithm}

\subsection{Benchmarks}
\label{sec:benchmarks_details}

Table~\ref{tab:tasks-full} describes each task, the number of questions used, and the task context supplied to the LLMs.

\begin{table}[h]
\centering
\small
\begin{tabularx}{\textwidth}{lrX}
\toprule
\textbf{Task} & \textbf{Size} & \textbf{Task Context} \\
\midrule

GSM8K & 500 &
The AI assistant receives a grade-school math word problem requiring multi-step arithmetic reasoning. It must produce a step-by-step solution ending with the final numerical answer. Performance is measured by exact-match accuracy on the final answer. \\

\midrule
MMLU & 500 &
The AI assistant receives a multiple-choice knowledge question drawn from diverse academic subjects including STEM, humanities, social sciences, and professional domains. It must select the correct answer from 4 options (A--D). Performance is measured by accuracy. \\

\midrule
Boolean Expressions & 250 &
The AI assistant receives a boolean expression composed of True/False values connected by logical operators (and, or, not) and must evaluate the expression to determine whether the result is True or False. Performance is measured by accuracy. \\

\midrule
Causal Judgement & 250 &
The AI assistant receives a scenario describing events and outcomes and must determine whether a specific factor was the cause of the outcome, answering Yes or No. This tests causal reasoning and counterfactual thinking. Performance is measured by accuracy. \\

\midrule
Disambiguation QA & 250 &
The AI assistant receives a sentence with an ambiguous pronoun and must determine which entity the pronoun refers to, selecting from multiple choices. This tests coreference resolution and language understanding. Performance is measured by accuracy. \\

\midrule
Formal Fallacies & 250 &
The AI assistant receives a deductive argument consisting of premises and a conclusion and must determine whether the argument is logically valid or invalid. This tests formal logical reasoning. Performance is measured by accuracy. \\

\midrule
Hyperbaton & 250 &
The AI assistant receives two sentences that differ only in adjective ordering and must determine which sentence has the correct (natural) English adjective order. This tests knowledge of English adjective ordering conventions. Performance is measured by accuracy. \\

\midrule
Penguins in a Table & 250 &
The AI assistant receives a table of penguin attributes (name, age, height, weight) and a question about the data. It must parse the table, reason about the attributes, and select the correct answer from multiple choices. This tests structured data parsing and tabular reasoning. Performance is measured by accuracy. \\

\midrule
Snarks & 250 &
The AI assistant receives two nearly identical statements and must determine which one is sarcastic. This tests understanding of pragmatics, tone, and the distinction between literal and figurative meaning. Performance is measured by accuracy. \\

\midrule
Tracking Shuffled Objects & 250 &
The AI assistant receives a description of objects being swapped between people in a series of exchanges. It must track the final position of each object and select the correct answer from multiple choices. Performance is measured by accuracy. \\

\bottomrule
\end{tabularx}
\caption{Full task context strings. Each string is injected into all optimizer and baseline prompts as the \tvar{task\_context} variable. GSM8K and MMLU use fixed 500-question subsamples; the remaining eight tasks are from BIG-Bench Hard (BBH, 250 examples each).}
\label{tab:tasks-full}
\end{table}

\subsection{Evaluation Protocol and Implementation Details}
\label{app:eval_protocol}

\subsubsection{Models and benchmark evaluation.}
\label{app:lms}
We use Llama 3.3 70B Instruct as the optimizer LLM and Llama 3.1 8B Instruct as the target LLM; both have 128K-token context windows.
The optimizer LLM performs feature elicitation, feature extraction, prompt generation, refinement, and baseline candidate generation.
The target LLM is the model whose system prompt is optimized and is accessed only through the evaluation function $f$.
We use the smaller target model to avoid task saturation and reduce evaluation wallclock.

Each evaluation $f(x)$ runs the target LLM with system prompt $x$ on a fixed task subset using the lm-eval framework.
All target-model evaluations are zero-shot, use greedy decoding with \texttt{temperature: 0.0} and \texttt{do\_sample: false}, and inject the candidate system prompt $x$ through the model wrapper.
Optimizer-side LLM calls use temperature 0.7 unless otherwise specified.

\begin{table}[h]
\centering
\small
\begin{tabularx}{\textwidth}{
  >{\raggedright\arraybackslash}p{0.16\textwidth}
  >{\raggedright\arraybackslash}p{0.24\textwidth}
  >{\raggedright\arraybackslash}X
  >{\raggedright\arraybackslash}p{0.24\textwidth}
}
\toprule
\textbf{Format} & \textbf{Task(s)} & \textbf{Input template} & \textbf{Answer extraction} \\
\midrule
Numeric answer
& GSM8K
& \texttt{Question: \{question\}\textbackslash nAnswer:}
& Regex: extract final number \\

Four-way multiple choice
& MMLU
& \texttt{Question: \{question\}\textbackslash n(A)...(D)...\textbackslash nAnswer:}
& Regex: \texttt{\textbackslash b([A-D])\textbackslash b} \\

True/False
& Boolean Expressions
& \texttt{Q: \{input\}\textbackslash nAnswer with just True or False.\textbackslash nA:}
& Regex: \texttt{(true\textbar{}false)}, case-insensitive \\

Yes/No
& Causal Judgement
& \texttt{Q: \{input\}\textbackslash nAnswer with just Yes or No.\textbackslash nA:}
& Regex: \texttt{(yes\textbar{}no)}, case-insensitive \\

Valid/Invalid
& Formal Fallacies
& \texttt{Q: \{input\}\textbackslash nAnswer with just valid or invalid.\textbackslash nA:}
& Regex: \texttt{(valid\textbar{}invalid)}, case-insensitive \\

Letter choice
& Disambiguation QA, Hyperbaton, Penguins in a Table, Snarks, Tracking Shuffled Objects
& \texttt{Q: \{input\}\textbackslash nAnswer with just the letter choice, e.g.\ (A).\textbackslash nA:}
& Regex: \texttt{\textbackslash(([A-F])\textbackslash)} \\
\bottomrule
\end{tabularx}
\caption{Evaluation templates and answer extraction rules, grouped by output format.}
\label{tab:answer-extraction}
\end{table}

\subsubsection{BO surrogate and feature-set selection.}
For \method, the surrogate model is a BoTorch \texttt{SingleTaskGP} with the default Mat\'ern 5/2 kernel with ARD, \texttt{Normalize(d=d)} input transform, and \texttt{Standardize(m=1)} outcome transform.
We fit the GP by maximum likelihood using \texttt{fit\_gpytorch\_mll} with \texttt{ExactMarginalLogLikelihood}.
The acquisition function is \texttt{qLogNoisyExpectedImprovement} with \texttt{X\_baseline} set to the training inputs, optimized over $[0,1]^d$ using \texttt{optimize\_acqf} with 20 restarts and 512 raw samples.
The GP is refit from scratch at each BO iteration on the current prompt-score history.

At each iteration, $K$ independently elicited feature sets compete via held-out prediction error.
We use leave-one-out cross-validation when the number of data points is below 10 and 10-fold cross-validation otherwise.
The selection metric is mean squared error on held-out points.
When $t>1$, the incumbent feature set from the previous iteration is also included as an additional candidate by re-extracting it on the current enlarged history and scoring it by the same cross-validation procedure.
This allows a previously predictive representation to persist, but does not assume that feature quality monotonically improves.
As a diagnostic, we compare GP cross-validation MSE against a constant predictor that outputs $\bar{y}_{\mathrm{train}}$ on held-out points.

\paragraph{In-context history subsampling.}
\label{app:subsampling}
When an optimizer-side LLM call conditions on prompt-score history and the history exceeds $n_{\max}$, we use a stratified subsample:
top 25\% by score, bottom 25\% by score, and a random sample from the middle 50\% to fill the remaining slots, with at least one example from the top and bottom groups.
This utility is shared by \method, OPRO, and TextGrad-style refinement, ensuring that these methods see the same in-context distribution when history is subsampled.

\subsubsection{Default hyperparameters.}
\label{app:hyperparams}
Table~\ref{tab:hyperparams} describes hyperparameters used in experiments.

\begin{table}[h]
\centering
\small
\begin{tabular}{lcl}
\toprule
\textbf{Parameter} & \textbf{Value} & \textbf{Description} \\
\midrule
$N$ & 30 & Total evaluation budget including $\mathcal{D}_0$ \\
$q$ & 5 & Candidates per iteration \\
$T$ & 6 & Total evaluated batches: $t{=}0$ is $\mathcal{D}_0$, and $t{=}1,\ldots,5$ are optimization rounds \\
$K$ & 5 & Feature elicitation rounds per iteration \\
$M$ & 10 & Total generation and refinement budget per candidate \\
$\tau$ & 0.1 & $\ell_2$ distance tolerance for refinement early stopping \\
$b$ & 10 & Feature-extraction batch size \\
$n_{\max}$ & 12 & Maximum in-context examples for optimizer-side LLM calls \\
$P_{\max}$ & 20 & PromptBreeder population cap \\
\bottomrule
\end{tabular}
\caption{Default hyperparameters.}
\label{tab:hyperparams}
\end{table}

\subsubsection{Ablation variants.}
\label{app:ablations}

Table~\ref{tab:ablations} describes variants tested in ablation study.

\begin{table}[h]
\centering
\small
\begin{tabularx}{\textwidth}{lX}
\toprule
\textbf{Variant} & \textbf{What changes} \\
\midrule
No refinement
& Skip sequential feature-gap refinement and use the initial realized prompt for each BO-selected feature target. \\

No BO
& Replace acquisition optimization with uniform random sampling in $[0,1]^d$. \\

Static features
& Freeze the first selected feature set and reuse it in subsequent optimization rounds without re-elicitation. \\

Independent extraction
& Set $b=1$, extracting features for each prompt independently instead of jointly in batches. \\
\bottomrule
\end{tabularx}
\caption{\method ablation variants. Each variant changes one design choice.}
\label{tab:ablations}
\end{table}

\subsubsection{Information access control.}
\label{app:info-access}
The optimizer LLM is used by several subroutines with different information access.
This separation is important because feature definitions may depend on prompt-score history, but extracted feature values should be based on prompt content rather than direct access to outcomes.

\begin{table}[h]
\centering
\small
\begin{tabular}{lcccc}
\toprule
\textbf{Subroutine} & \textbf{Prompts $x_i$} & \textbf{Scores $y_i$} & \textbf{Features $\mathcal{F}$} & \textbf{Target $z^{\mathrm{target}}$} \\
\midrule
\textsc{DefineFeatures} & \checkmark & \checkmark & \checkmark\textsuperscript{*} & --- \\
\textsc{ExtractFeatures} & \checkmark & --- & \checkmark & --- \\
\textsc{InitialGenerate} & \checkmark & \checkmark & \checkmark & \checkmark \\
\textsc{FeatureGuidedRefine} & \checkmark & \checkmark & \checkmark & \checkmark \\
\bottomrule
\end{tabular}
\caption{Information available to optimizer-LLM subroutines.}
\label{tab:info-access}
\end{table}

\noindent\textsuperscript{*}When $t>1$, \textsc{DefineFeatures} also sees the incumbent feature set $\mathcal{F}_{t-1}$.

\textsc{ExtractFeatures} never sees evaluation scores.
Thus, although scores are used to choose which feature set is most predictive, the coordinates assigned to prompts are extracted from prompt content under the selected feature definitions rather than copied or inferred directly from outcomes.

\subsection{Analysis Details}

For completeness, we define the linear CKA used in Figure~\ref{fig:feature-stability}.
Given two representations $Z \in \mathbb{R}^{n \times d_Z}$ and $Z' \in \mathbb{R}^{n \times d_{Z'}}$ evaluated on the same $n$ prompts, let $K = ZZ^\top$ and $L = Z'Z'^\top$ be their linear Gram matrices.
Linear CKA is
\[
\mathrm{CKA}(K,L)
=
\frac{\langle HKH, HLH\rangle_F}
{\|HKH\|_F \|HLH\|_F},
\quad
H = I_n - \frac{1}{n}\mathbf{1}\mathbf{1}^\top .
\]
Here $\|\cdot\|_F$ and $\langle \cdot,\cdot\rangle_F$ denote the Frobenius norm and inner product.

\subsection{Existing assets and licenses.}
We use public benchmark datasets GSM8K, MMLU, and BIG-Bench Hard, and cite their original papers in the main text.
GSM8K, MMLU, and the BIG-Bench Hard repository are distributed under MIT licenses.
We use the lm-evaluation-harness and BoTorch software packages, both under MIT licenses.
The optimizer and target models are Llama 3.3 70B Instruct and Llama 3.1 8B Instruct, used under their respective Meta Llama Community License Agreements and Acceptable Use Policies.
We do not redistribute model weights or modified benchmark datasets.

\clearpage
\section{Prompts}
\label{app:prompts}

\subsection{\method Prompts}

\subsubsection{\textsc{DefineFeatures}}
\label{prompt:define-features}

{\small
This is the \textbf{only} prompt where the optimizer LLM sees evaluation scores $y_i$.

\begin{promptbox}[breakable, title={\textsc{DefineFeatures}}]
You are an expert at analyzing text objects and identifying patterns that predict performance.

These text objects are system prompts for an AI assistant performing the following task:
\tvar{task\_context}

Define numerical features that capture what makes a prompt effective FOR THIS SPECIFIC TASK. Focus on properties that have a CAUSAL relationship to the AI's ability to solve this type of problem --- properties that, if changed in a prompt, would directly affect performance. Avoid features that merely correlate with score or describe surface-level text properties without a causal mechanism.

Before defining features, closely inspect the data:
\begin{enumerate}
\item What specific patterns or properties are present in the TOP-performing text objects but absent from the BOTTOM ones?
\item What do the BOTTOM-performing text objects have that the TOP ones don't?
\item List 2--3 concrete observations about these differences.
\end{enumerate}
Then define features that capture these observed causal differences.

Requirements for each feature:
\begin{itemize}
\item \textbf{name}: A short identifier.
\item \textbf{description}: Explain what the feature measures for this specific task, what 0 and 1 represent (anchor semantics), and why it causally affects performance.
\item All feature values must be in [0, 1].
\item Each feature MUST be INDEPENDENT of the others. If you can predict one feature's value from the others, they are redundant --- keep only the more causally relevant one.
\item Before finalizing, verify: for each pair of features, can you imagine a text that is high on one and low on the other? If not, they are not independent --- drop one.
\end{itemize}

Choose the number of features based on the available data. With small datasets (10 or fewer examples), prefer fewer features (1--2) --- a single well-chosen feature is better than several noisy ones. As the dataset grows and patterns become clearer, additional features can capture richer structure that was not visible with less data. Every feature must earn its place by capturing genuinely independent variation.

Respond with a JSON array of objects, each with `name' and `description' fields. Example:
\begin{verbatim}
[
  {
    "name": "example_feature_name",
    "description": "What this feature measures for the specific task,"\
            "what 0 and 1 represent, and why it causally affects performance."
  }
]
\end{verbatim}

\emph{[When $t > 1$ and incumbent features $\mathcal{F}_{t-1}$ are provided, the following is prepended to the input. Incumbent features are shown in shuffled order to avoid position bias.]}

The following features are currently in use for this task:\\
--- \tvar{f.name}: \tvar{f.description} \quad\emph{[for each incumbent feature]}

Analyze the data to identify patterns or properties NOT captured by these features. You may keep features that are clearly the most predictive, but your proposed set MUST differ from the current set by at least one meaningful change: add a genuinely new feature, remove a feature, or substantively redefine one. Renaming or trivially rephrasing an existing feature does NOT count as a change.

\medskip
\emph{[Then always:]}
\medskip

Here are \tvar{n} text objects with their performance scores (higher is better), sorted by performance tier:

\tvar{examples\_text} \quad\emph{[formatted with TOP/BOTTOM tier labels]}

You have \tvar{n} text objects available. Be mindful of this sample size --- fewer high-quality, genuinely distinct features are far better than many overlapping ones.

First list your observations about what distinguishes TOP from BOTTOM performers, then define features. Return ONLY the JSON array.
\end{promptbox}
}

\subsubsection{\textsc{ExtractFeatures}}
\label{prompt:extract-features}

{\small
Does \textbf{not} include evaluation scores $y_i$ --- prevents information leakage.

\begin{promptbox}[breakable, title={\textsc{ExtractFeatures}}]
You are an expert at analyzing text and rating it on specific features. For each text object, assign a value in [0, 1] for each feature based on the feature description.

These text objects are system prompts for an AI assistant performing the following task:
\tvar{task\_context}

Rate each text object considering how the features relate to this specific task.

Be consistent: similar texts should get similar scores. Use the full range of [0, 1] -- don't cluster all values near the middle.

Respond with a JSON object keyed by text object ID, where each value is an object mapping feature names to numeric values.

Example:
\begin{verbatim}
{
  "0": {"feature_a": 0.75, "feature_b": 0.30},
  "1": {"feature_a": 0.45, "feature_b": 0.80}
}
\end{verbatim}

Features to rate:\\
--- \textbf{\tvar{f.name}}: \tvar{f.description} \quad\emph{[for each feature]}

Text objects to rate:\\
--- Text Object ID: ``\tvar{tid}'' ---\\
\tvar{content} \quad\emph{[for each text in batch]}

Rate each text object on each feature. Values must be numbers in [0, 1]. Return ONLY the JSON object.
\end{promptbox}
}

\subsubsection{Initial Generation}
\label{prompt:generation}

{\small
Used in Phase A of \textsc{GenerateWithRefinement} (Algorithm~\ref{alg:reelicit-full}).

\begin{promptbox}[breakable, title={Initial Generation}]
You are an expert prompt engineer. Generate a system prompt for an AI assistant performing the following task:
\tvar{task\_context}

The prompt should match specific target feature values.

You will be given:
\begin{enumerate}
\item Feature definitions with their semantics.
\item Example prompts labeled [TOP] or [BOTTOM] by performance, with their feature values and scores.
\item A target feature vector to aim for.
\end{enumerate}

Study what makes the TOP-scoring examples effective and what makes the BOTTOM-scoring examples less effective. Learn from the best examples --- understand the patterns and approaches that lead to high performance.

The target feature vector indicates a promising direction to explore. Generate a NEW system prompt that combines the successful patterns from the TOP examples while matching the target feature values.

Output ONLY the generated prompt text, with no additional commentary or formatting.

\medskip

Feature definitions:\\
--- \textbf{\tvar{f.name}}: \tvar{f.description} \quad\emph{[for each feature]}

Example prompts (sorted by performance, with tier labels):\\
\tvar{examples\_text} \quad\emph{[TOP/BOTTOM labels, features, and scores]}

Target feature vector:\\
\tvar{target\_text} \quad\emph{[JSON dict, e.g.\ \{"conciseness": 0.85, "step\_guidance": 0.70\}]}

Generate a system prompt that combines the best patterns from the TOP examples while matching the target features. Output ONLY the prompt text.
\end{promptbox}
}

\subsubsection{\textsc{FeatureGuidedRefine}}
\label{prompt:refinement}

{\small
Used in Phase B of \textsc{GenerateWithRefinement} (Algorithm~\ref{alg:reelicit-full}).

\begin{promptbox}[breakable, title={\textsc{FeatureGuidedRefine}}]
You are an expert prompt engineer. Modify the given system prompt to better match target feature values.

The system prompt is for an AI assistant performing the following task:
\tvar{task\_context}

Consider what text patterns in the reference examples correspond to the desired feature values, and what specific phrases in the current prompt are causing the gaps.

Rules:
\begin{itemize}
\item Focus on the LARGEST gaps first (they are listed in order of priority).
\item MODIFY the existing text --- do not rewrite from scratch.
\item PRESERVE aspects that are already well-aligned with their targets.
\item Output ONLY the modified prompt text, with no additional commentary or formatting.
\end{itemize}

\emph{[When reference examples are provided:]}

Reference examples (sorted by performance):\\
\tvar{examples\_text} \quad\emph{[TOP/BOTTOM labels, features, and scores]}

Use the TOP examples as reference for the style and patterns that correspond to the desired feature values.

\medskip
\emph{[Then always:]}
\medskip

Current system prompt:\\
\tvar{text}

Feature gap analysis (sorted by gap magnitude, largest first):\\
\tvar{gap\_text} \quad\emph{[JSON array; format shown below]}

Modify the system prompt to reduce the largest feature gaps. Output ONLY the modified prompt text.

\medskip
\emph{[Gap analysis format:]}
\begin{verbatim}
[
  {
    "feature_name": "step_by_step_guidance",
    "definition": "How explicitly the prompt instructs...",
    "target": 0.85,
    "current": 0.3,
    "gap": 0.55,
    "direction": "increase"
  },
  ...
]
\end{verbatim}
\end{promptbox}
}

\clearpage
\subsection{Baseline Prompts}

\subsubsection{APE-style Guided Sampling}
\label{prompt:ape}

{\small
History is not used. Used by Algorithm~\ref{alg:ape}.

\begin{promptbox}[title={APE-style Guided Sampling}]
You are an expert prompt engineer. Generate diverse system prompts for an AI assistant to help it perform well on a specific task.

\medskip

Task description:\\
\tvar{task\_context}

Generate exactly \tvar{q} diverse system prompts. Each should take a different approach (e.g., step-by-step reasoning, concise instructions, structured format, direct commands, role-playing, etc.).

Return a JSON array of \tvar{q} strings, where each string is a complete system prompt.
\end{promptbox}
}

\subsubsection{OPRO}
\label{prompt:opro}

{\small
History is stratified-subsampled and sorted worst-to-best. Used by Algorithm~\ref{alg:opro}.

\begin{promptbox}[title={OPRO}]
You are an expert prompt optimizer. Analyze previous system prompts and their performance scores, then generate improved prompts.

\medskip

Task description:\\
\tvar{task\_context}

Here are previous system prompts and their scores (higher is better), sorted from worst to best:

\tvar{history\_text}

\emph{[Each entry formatted as:]}\\
\texttt{--- Prompt (Score: \tvar{score}) ---}\\
\texttt{\tvar{text}}

Analyze what makes the higher-scoring prompts better. Then generate exactly \tvar{q} new system prompts that should score even higher.

Return a JSON array of \tvar{q} strings.
\end{promptbox}
}

\subsubsection{PromptBreeder Mutation}
\label{prompt:pb-mutation}

{\small
One of three mutation instructions is selected at random per offspring. Used by Algorithm~\ref{alg:pb}.

\begin{promptbox}[title={PromptBreeder Mutation}]
You are an expert prompt engineer. Modify system prompts to improve their effectiveness.

\medskip

Task description:\\
\tvar{task\_context}

Instruction: \tvar{instruction}

Original system prompt:\\
\tvar{parent\_prompt}

Output ONLY the modified system prompt, no commentary.

\medskip
\emph{The \tvar{instruction} variable is one of three mutation types (selected uniformly at random):}
\begin{enumerate}
\item \textbf{rewrite\_clearer}: ``Rewrite the following system prompt to be clearer and more precise. Keep the core instructions but improve clarity.''
\item \textbf{explicit\_reasoning}: ``Modify the following system prompt to make reasoning steps more explicit. Add instructions for step-by-step thinking.''
\item \textbf{concise\_constraints}: ``Make the following system prompt more concise. Remove redundancy while preserving all important constraints.''
\end{enumerate}
\end{promptbox}
}

\subsubsection{PromptBreeder Recombination}
\label{prompt:pb-recombination}

{\small
Used for the last offspring ($j = q$) in Algorithm~\ref{alg:pb}.

\begin{promptbox}[title={PromptBreeder Recombination}]
You are an expert prompt engineer. Combine the best aspects of two system prompts into a single improved prompt.

\medskip

Task description:\\
\tvar{task\_context}

Parent prompt 1:\\
\tvar{parent1}

Parent prompt 2:\\
\tvar{parent2}

Create a new system prompt that combines the best aspects of both parents. Output ONLY the new prompt, no commentary.
\end{promptbox}
}

\subsubsection{TextGrad-style Black-Box Refinement}
\label{prompt:textgrad}

{\small
Explicit 3-step chain-of-thought before generating variants. Used by Algorithm~\ref{alg:tg}. \tvar{best\_prompt} and \tvar{best\_score} are the global best from the full history, not just the subsampled trajectory.

\begin{promptbox}[title={TextGrad-style Black-Box Refinement}]
You are an expert prompt optimizer. Given a trajectory of system prompts and their performance scores, analyze what makes some perform better than others, then critique the current best prompt and generate improved variants.

\medskip

Task description:\\
\tvar{task\_context}

Trajectory of prior prompts and their scores (higher is better, sorted worst-to-best):

\tvar{history\_text}

Current best prompt (score: \tvar{best\_score}):\\
\tvar{best\_prompt}

Step 1: briefly analyze the trajectory -- what patterns separate high-scoring prompts from low-scoring ones?

Step 2: critique the current best prompt: what could be improved to get a higher score?

Step 3: generate exactly \tvar{q} improved variants based on your analysis and critique. Each variant should address a different aspect of the critique.

Return a JSON array of \tvar{q} strings, where each string is a complete improved system prompt.
\end{promptbox}
}

\subsubsection{Initial Dataset $\mathcal{D}_0$ Generation}
\label{prompt:d0}

{\small
Generated once per (task, seed) pair and shared across all methods. Used by Algorithm~\ref{alg:d0}.

\begin{promptbox}[title={Initial Dataset $\mathcal{D}_0$ Generation}]
You are an expert prompt engineer. Generate diverse system prompts for an AI assistant.

\medskip

Task description:\\
\tvar{task\_context}

Generate exactly \tvar{q} diverse system prompts that would help an AI assistant perform well on this task. Each prompt should take a different approach (e.g., step-by-step reasoning, concise instructions, structured format, etc.).

Return a JSON array of \tvar{q} strings, where each string is a complete system prompt.
\end{promptbox}

\emph{Note: This prompt is nearly identical to the APE prompt (Section~\ref{prompt:ape}) with minor differences: (1) the APE prompt says ``to help it perform well on a specific task'' while this one says ``for an AI assistant''; (2) the APE prompt includes ``direct commands, role-playing'' among approach suggestions, which is omitted here; (3) this prompt adds ``that would help an AI assistant perform well on this task'' after ``diverse system prompts'' and uses ``Each prompt should'' instead of ``Each should.''}
}

\end{document}